\newcommand{\Vis}{\mathcal{V}}
\newcommand{\UP}{\text{U}}
\newcommand{\vis}{v}
\newcommand{\hid}{h}
\newcommand{\Thett}{\widetilde{\Theta}}
\newcommand{\thetb}{\theta}
\newcommand{\thetbt}{\widetilde{\theta}}
\renewcommand{\P}{p}
\newcommand{\Pdata}{\P_{\text{\tiny data}}}
\DeclareMathOperator{\argmin}{\textnormal{argmin}}
\newcommand{\tr}{\text{tr}}
\newcommand{\defeq}{\vcentcolon=}
\newcommand{\Ts}{\widetilde{\Theta}}
\newcommand{\ts}{\tilde{\theta}}
\newcommand{\RR}{\mathbb{R}}
\newcommand{\EE}{\mathbb{E}}
\newcommand{\nll}[2]{\mathcal{L}(#1|\,#2)}
\newcommand{\x}{x}
\newcommand{\xh}{\hat{h}}
\newcommand{\z}{\bm{\tau}}
\newcommand{\zt}{\widetilde{\z}}
\newcommand{\zd}{\z^*}
\newcommand{\zp}{\z^\prime}
\newcommand{\w}{\omega}
\newcommand{\wt}{\widetilde{\omega}}
\newcommand{\at}{\widetilde{a}}
\newcommand{\pit}{\widetilde{\pi}}
\newcommand{\mut}{\widetilde{\mu}}
\newcommand{\mub}{\mu}
\definecolor{darkgreen}{rgb}{0.0, 0.5, 0.0}
\newcommand{\veps}{\varepsilon}
\newcommand{\normal}[2]{\mathcal{N}\big(#1,#2\big)}
\newcommand{\Vt}{\widetilde{V}}
\newcommand{\Qt}{\widetilde{Q}}
\newcommand{\Rt}{\widetilde{R}}
\newcommand{\At}{\widetilde{A}}
\newcommand{\Ct}{\widetilde{C}}
\newcommand{\Vb}{\hat{V}}
\newcommand{\Qb}{\hat{Q}}
\newcommand{\Rb}{\hat{R}}
\newtheorem{theorem}{Theorem}
\newtheorem{corollary}[theorem]{Corollary}
\newtheorem{lemma}[theorem]{Lemma}
\newtheorem{remark}[theorem]{Remark}
\title{Divergence-Based Motivation for Online EM\\
and Combining Hidden Variable Models}
\author{ {\bf Ehsan Amid and Manfred K. Warmuth} \\
UC Santa Cruz and Google Brain, Mountain View\\
\texttt{\{eamid, manfred\}@google.com}
}
\begin{document}

\maketitle

\begin{abstract}
    Expectation-Maximization (EM) is a prominent approach for parameter estimation of hidden (aka \mbox{latent}) variable models. Given the full batch of data, EM forms an upper-bound of the negative log-likelihood of the model at each iteration and updates to the minimizer of this upper-bound. We first provide a ``model level'' interpretation of the EM upper-bound as sum of relative entropy divergences to a set of singleton models, induced by the set of observations. Our alternative motivation unifies the ``observation level'' and the ``model level'' view of the EM. As a result, we formulate an online version of the EM algorithm by adding an analogous inertia term which corresponds to the relative entropy divergence to the old model. Our motivation is more widely applicable than the previous approaches and leads to simple online updates for mixture of exponential distributions, hidden Markov models, and the first known online update for Kalman filters. Additionally, the finite sample form of the inertia term lets us derive online updates when there is no closed-form solution. Finally, we extend the analysis to the distributed setting where we motivate a systematic way of combining multiple hidden variable models. Experimentally, we validate the results on synthetic as well as real-world datasets.

\end{abstract}

\vspace{-6mm}
\section{INTRODUCTION}
The goal of EM is to minimize\footnote{We adopt the
minimization view of the EM algorithm by considering the
negative of the log-likelihood function. This will simplify
our online EM motivation in the following.} the negative
log-likelihood (loss) of a hidden variable model given a set of iid observations from the data.
Instead of directly minimizing the negative log-likelihood,
EM forms an upper-bound of the loss at each iteration and
then updates to the minimizer\footnote{%
Or the approximate solution of the minimization problem.}
of the upper-bound. A basic lemma guarantees that due to
the tightness of the upper-bound at the current estimate of
the parameters, every EM update decreases the negative log-likelihood
(or keeps it unchanged when the current estimate is at a local minimum). 

We first provide an alternative view of the EM upper-bound as sum of joint relative entropy divergences between a set of ``singleton models'' and the new model. Each singleton model is induced by minimizing the Monte Carlo approximation of the relative entropy divergence to the data distribution, based on the single observation. This motivates the use of this divergence as an inertia term for an online variant of the EM algorithm.
We add the relative entropy divergence between the joint distributions at the old and the new model to the EM upper-bound and update to the minimizer.
Curiously enough, our divergence based online EM updates coincide with the
ones given in~\cite{sato} and their generalization in~\cite{cappe}\footnote{Which is based on a stochastic
approximation of the EM upper-bound.}. However, we will
show that the new formulation is more versatile
and gives online updates for more complex models.
In particular, our approach avoids having to identify the
sufficient statistics of the joint distribution. Additionally, it can handle cases where there exists no closed-form solution for the EM update using Monte Carlo approximation of the inertia term. Finally, we extend the same methodology to minimizing a sum of relative entropy divergences between different models in a distributed setting. This results in an efficient way of combining hidden variable models. To summarize:

\vspace{-0.3cm}
\begin{itemize}[itemsep=0.7mm, parsep=0pt]
    \item[--] We motivate the observation level view of the EM algorithm as combining models via minimizing sum of relative entropy divergences.
    \item[--] We formulate an online EM algorithm based minimizing the sum of divergences to the singleton models (observations) and the old model (inertia). The new formulation allows an approximate form of the online EM algorithm for cases where the updates do not have a closed-form.
    \item[--]  Using the new formulation, we obtain closed-form updates for mixtures of exponential
        distributions, hidden Markov models (HMMs), and Kalman filters, and approximate updates for the Compound Dirichlet distribution.
        \item[--] Most importantly, we develop divergences
	between hidden variable models and provide a method
	for combining such models by minimizing convex combinations of divergences.
\end{itemize}
\vspace{-0.4cm}
Here we only consider models
for which the EM upper-bound induces a closed-form.
We omit extensions where approximations are used for the upper bound
(e.g. variational inference \citep{blei}).

\paragraph{Previous work}
EM is one of the most well-studied algorithms due to its simplicity
and monotonic descent property~\cite{embook, do, gupta}.  It
was also shown that EM converges to a stationary point of the
negative log-likelihood under some mild conditions~\cite{wu}.
EM is naturally a batch algorithm.
Attempts for developing online versions of EM start with the work of~\citet{titterington},
who employs a second order method by approximating
the complete data Fisher information matrix.
This algorithm has been shown to almost surely converge to a local
minimum of the negative log-likelihood~\cite{wang}.
However, deriving the updated for this method requires calculating sophisticated derivatives and matrix inversions which makes it intractable for complex models such as HMMs and Kalman filters. We show that our alternative divergence based motivation of the EM algorithm reduces to the work of~\citep{cappe}, which substitutes the E-step by a stochastic approximation of the EM upper-bound while keeping the M-step unchanged. \citet{cappe} showed that for models where the complete data likelihood belongs to an exponential family, the updates correspond to stochastic approximation of sufficient statistics. While this is intuitive, identifying the complete data sufficient statistic for more complex models becomes infeasible in practice. On the other hand, our new formulation provides several advantages. First, we avoid characterizing the sufficient statistics by directly forming the inertia term between the current model and the updates. As a result, we can easily derive online EM updates for more complex models such as HMMs and Kalman filters. Additionally, we can apply the approximate form of the inertia term for problems where the minimization of the EM upper-bound does not have a closed-form solution.
Finally, the new divergences between hidden variable models
lead to a method for combining multiple hidden variable
models and this has useful applications in the distributed
setting.

Previous online EM algorithms for learning exponential family models have been mainly based on gradient ascent methods or heuristic approaches for maximizing the likelihood or updating the sufficient statistics.
Therefore, the resulting updates are commonly unstable and
require careful tuning of the parameters. Generally, these updates also lack performance guarantees. Specifically,
online methods have been developed for mixture of
exponential distributions \cite{neal, singer} and for online (aka block-wise) learning of HMMs
\cite{baldi,singer-hmm,cappe-quasi,mizuno}.
Also, inline (aka symbol-based)
methods have been proposed for learning
HMMs~\cite{krishnamurthy, collings, legland, garg, florez,
mongillo,cappe-hmm,kontorovich}. Our method falls
into the category of block-wise updates for HMMs. To the
best of our knowledge, no online algorithms were known for
Kalman filters. All earlier training methods were
based on either the batch EM algorithm via Kalman
smoothing or inline updates via Kalman
filtering~\cite{ghahramani}.

\section{BATCH EM MOTIVATION}
Given an iid sample $\Vis = \{\vis_n\}_{n=1}^N$
from an underlying data distribution $\Pdata(\vis)$, the EM algorithm
seeks to minimize the negative log-likelihood loss
    \[
        \nll{\Thett}{\Vis} = -\sfrac{1}{N}\sum_n \log \underbrace{\P(\vis_n|\, \Thett)}_{\int_{\hid} \P(\hid, \vis_n|\, \Thett)} \, , \vspace{-2mm}
    \]
wrt the parameters $\Thett$.
Here $\vis_n$ is the $n$-th observation of some
visible variable and $\hid$ denotes the hidden variable.
The above minimization problem, which involves logs of
integrals (or sums in the discrete case) is typically
non-convex and infeasible in practice.
Adding a divergence to a loss can simplify the
minimization.
Batch EM employs the following upper-bound of the loss:
\begin{align}
    \UP_{\Theta}(\Ts|\Vis)
    := & -\sfrac{1}{N}\sum_n \log \int_h \P(h, v_n|\Thett)
	\nonumber\\
	&+ \sfrac{1}{N}\sum_n\int_{h}\P(h|v_n,\Theta)
      \log\frac{\P(h|v_n,\Theta)}{\P(h|v_n,\Ts)}
\nonumber\\
    = &- \sfrac{1}{N}\sum_n \EE_{\P(h|\,v_n, \Theta)}\bigg[
              \log {\P(h, v_n|\,\Thett)}\bigg]
          \nonumber\\
          &- \sfrac{1}{N}\underbrace{\sum_n \mathbb{H}_{\scriptsize \Theta_n}\!(H|\,\vis_n)}_{\text{const.}}\,,\label{eq:em-up-simple}\vspace{-5mm}
\end{align}
where $\Theta$ denotes the current parameter set and $\mathbb{H}_{\scriptsize \Theta_n}\!(H|\,\vis_n)\coloneqq -\int_h \P(\hid|\,\vis_n, \Theta)\log \P(\hid|\,\vis_n, \Theta)$ is the conditional differential entropy of $H|\,\vis_n$.
Batch EM algorithm proceeds by forming the upper-bound by
calculating the posteriors $\P(h|v_n,\Theta)$ based on the
current estimate $\Theta$ (the E-step)
and then minimizing \eqref{eq:em-up-simple} wrt $\Thett$
and updating $\Theta$ to the minimized parameters (the M-step).
Minimizing the upper-bound is easier than minimizing the negative log-likelihood
directly because logs of integrals are now replaced by logs of joints. Since the upper-bound is tight, i.e. $\UP_{\Theta}(\Theta|\Vis) = \nll{\Theta}{\Vis}$, a decrement in the value of upper-bound amounts to a reduction in negative log-likelihood, that is, $\UP_{\Theta}(\Theta^{\text{\tiny new}}|\Vis) < \UP_{\Theta}(\Theta|\Vis) \Rightarrow  \nll{\Theta^{\text{\tiny new}}}{\Vis} <  \nll{\Theta}{\Vis}$.

We now rewrite the upper-bound as sum of relative entropy divergences to a set of singleton models. Given the current model estimate $\Theta$, let
\begin{equation}
    \label{eq:singleton}
    \P(\hid, \vis|\, \Theta_n) \coloneqq \delta_{\vis_n}(\vis)\, \P(\hid|\, \vis, \Theta)\, ,
\end{equation}
where $\delta_{\vis_n}(\vis)$ is the Dirac measure centered at $\vis_n$. Note that $\Theta_n$ is an estimate of the model that minimizes the upper-bound~\eqref{eq:em-up-simple} at $\Theta$ using a single observation $\vis_n$. Then, the relative entropy divergence between the models $\Theta_n$ and $\Thett$ becomes
\begingroup
\allowdisplaybreaks
\begin{align}
    \label{eq:re-singleton}
    &\Delta_{\text{\tiny RE}}(\Theta_n, \Thett) =
    \int_{h, v} \P(v,h|\,\Theta_n) \log\frac{\P(v,h|\,\Theta_n)}{\P(v,h|\,\Thett)}\nonumber\\
    &= -\int_{\hid, \vis} \P(\hid, \vis|\, \Theta_n) \log\P(h, v|\,\Thett) -\mathbb{H}_{\scriptsize \Theta_n}\!(H, V)\nonumber\\
    &= -\EE_{\P(h|\,v_n, \Theta)}\bigg[
              \log \P(h, v_n|\,\Thett)\bigg] -\underbrace{\mathbb{H}_{\scriptsize \Theta_n}\!(H, V)}_{\text{const.}}
          .\vspace{-5mm}
\end{align}
\endgroup
where $\mathbb{H}_{\scriptsize \Theta_n}\!(H, V) \coloneqq -\int_{\hid, \vis} \P(\hid, \vis|\, \Theta_n)\log\P(\hid, \vis|\, \Theta_n)$ is the joint differential entropy of $H$ and $V$. Note that $\mathbb{H}_{\scriptsize \Theta_n}\!(H, V) = \mathbb{H}_{\scriptsize \Theta_n}\!(V) + \mathbb{H}_{\scriptsize \Theta_n}\!(H|\, V)$.
Thus, the $n$-th term in the EM upper-bound~\eqref{eq:em-up-simple} has the exact same form as~\eqref{eq:re-singleton} minus the $\mathbb{H}_{\scriptsize \Theta_n}\!(V)$ term. Although this term is unbounded for the singleton distribution in~\eqref{eq:singleton}, it acts a constant wrt $\Thett$ and can be omitted from the upper-bound. Thus, one step of the EM algorithm can be seen as minimizing sum of the relative entropy divergences~\eqref{eq:re-singleton} to the singleton distributions~\eqref{eq:singleton}, that is,
\[
    \UP_{\Theta}(\Ts|\Vis) =  \sfrac{1}{N}\,\sum_n \Delta_{\text{\tiny RE}}(\Theta_n, \Thett) + \sfrac{1}{N}\underbrace{\sum_n \mathbb{H}_{\scriptsize \Theta_n}\!(V)}_{\text{const.}}\, .
\]

\section{ONLINE EM MOTIVATION}
\label{sec:new}

EM is naturally a batch algorithm and requires the full set of observations to carry out each iteration. On the other hand, online algorithms only receive one example (or a small
batch $\Vis^t$) at every iteration $t$.
The updates minimize the loss of the given batch
(in our case an upper-bound of the loss) plus an \emph{inertia} term
(a second divergence)
that ensures that the updates remain close to the current estimates $\Theta^t$. By the model view of the EM upper-bound in~\eqref{eq:re-singleton}, it is natural to choose the inertia term in the same form, i.e. a relative entropy divergence to the current model $\Theta^{(t)}$. Thus, the online update minimizes
\begin{equation}
\label{eq:online-em}
    \Theta^{(t+1)}\! =\! \underset{\Thett}{\argmin}    \underbrace{\UP_{\Theta^t}(\Thett|\,\Vis^{(t)})}_{\text{loss}}	+\sfrac{1}{\eta^{(t)}}\, \underbrace{\Delta_{\text{\tiny RE}}(\Theta^{(t)}, \Thett)}_{\text{inertia}} ,\vspace{-2mm}
\end{equation}
where $\Theta^{(t)}$ and $\Vis^{(t)}$ are the parameters and the given batch of observations at round $t$, respectively. Based on our discussion in the previous section, update~\eqref{eq:online-em} corresponds to combining $\vert\Vis^{(t)}\vert + 1$ models, therefore is guaranteed to have the same form as a batch EM update. The parameter $\eta$ can be seen as a learning rate which controls the extent that the parameters are affected by the new observations. Note that $\eta \rightarrow \infty$ recovers the vanilla EM algorithm on the batch $\Vis^{(t)}$ while $\eta \rightarrow 0$ keeps the parameters unchanged. Moreover, following the tightness of the upper-bound, the objective of~\eqref{eq:online-em} is equal to $\nll{\Theta^{(t)}}{\Vis^{(t)}}$ at $\Thett = \Theta^{(t)}$. Thus, every step of the online EM algorithm decreases the negative log-likelihood of the model over the batch $\Vis^{(t)}$.

A few remarks are in order. The objective~\eqref{eq:online-em} is equal to the objective function of the online EM algorithm of~\cite{cappe} up to additive constant terms wrt $\Thett$.

\begin{theorem}
    \label{obs:cappe}
    The objective function in~\eqref{eq:online-em} is equal to the objective function of the online EM algorithm of~\cite{cappe} up to additive constant terms wrt $\Thett$.
\end{theorem}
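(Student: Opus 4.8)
The plan is to reduce both objectives to the same canonical form, namely a linear functional of $\log\P(h,v|\Thett)$ integrated against a mixture measure over $(h,v)$, and then to read off the dictionary between the learning rate $\eta^{(t)}$ and Cappé's step size. Because everything is affine in $\log\P(\cdot|\Thett)$, matching the two measures (up to a global positive scale) immediately gives equality up to additive constants.

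First I would expand the objective in~\eqref{eq:online-em}. Inserting the simplified upper-bound~\eqref{eq:em-up-simple} for the loss and the relative-entropy definition from~\eqref{eq:re-singleton} (now taken between the full old model $\Theta^{(t)}$ and $\Thett$) for the inertia, and discarding every $\Thett$-independent term (the conditional entropy $\mathbb{H}_{\Theta_n}(H|V)$ and the joint entropy $\mathbb{H}_{\Theta^{(t)}}(H,V)$, both constants), the objective collapses to
\[
-\tfrac{1}{N}\sum_n \EE_{\P(h|v_n,\Theta^{(t)})}\!\big[\log\P(h,v_n|\Thett)\big]\;-\;\tfrac{1}{\eta^{(t)}}\,\EE_{\P(h,v|\Theta^{(t)})}\!\big[\log\P(h,v|\Thett)\big],
\]
with $N=|\Vis^{(t)}|$. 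This is a weighted negative expected complete-data log-likelihood: weight $1/N$ against the new-data posteriors and weight $1/\eta^{(t)}$ against the old-model joint $\P(h,v|\Theta^{(t)})$.

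Second I would recall Cappé's objective, which applies when the complete-data likelihood is exponential-family, $\log\P(h,v|\Thett)=\langle s(h,v),\phi(\Thett)\rangle-A(\Thett)+\mathrm{const}$. There the online update tracks a running statistic $\hat s^{(t+1)}=(1-\gamma^{(t)})\,\hat s^{(t)}+\gamma^{(t)}\,\EE_{\P(h|v^{(t)},\Theta^{(t)})}[s(h,v^{(t)})]$ and the M-step maximizes $\langle \hat s^{(t+1)},\phi(\Thett)\rangle-A(\Thett)$. The crucial step is the exponential-family moment-matching identity: since $\Theta^{(t)}$ is the exact maximizer from the previous round, first-order optimality forces $\hat s^{(t)}=\nabla A(\phi(\Theta^{(t)}))=\EE_{\P(h,v|\Theta^{(t)})}[s(h,v)]$, i.e.\ the accumulated statistic equals the mean sufficient statistic of the current model. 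Substituting this, and using that the mixing weights sum to one so that $A(\Thett)$ factors through, Cappé's M-step objective becomes, up to a $\Thett$-independent constant, the convex combination $(1-\gamma^{(t)})\,\EE_{\P(h,v|\Theta^{(t)})}[\log\P(h,v|\Thett)]+\gamma^{(t)}\,\EE_{\P(h|v^{(t)},\Theta^{(t)})}[\log\P(h,v^{(t)}|\Thett)]$ (averaged over the batch when $N>1$). This is exactly the structural bridge that turns Cappé's "accumulated sufficient statistic" term into a relative-entropy inertia term to the old model, and I expect it to be the main obstacle, since it is the only place that uses both the exponential-family form and the exactness of the previous M-step.

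Finally I would match the two mixtures. Both are $-\!\int\!\mu(h,v)\log\P(h,v|\Thett)$ plus constants; mine carries old/new weights in ratio $(1/\eta^{(t)}):1$ while Cappé's carries $(1-\gamma^{(t)}):\gamma^{(t)}$. Normalizing (a positive $\Thett$-independent rescale) and identifying $\gamma^{(t)}=\eta^{(t)}/(1+\eta^{(t)})$ makes the two measures coincide, so the objectives agree up to the claimed additive constants; equivalently they define the same $\argmin$. This identification also reproduces the limiting remarks in the text, namely $\eta^{(t)}\!\to\!\infty\Leftrightarrow\gamma^{(t)}\!\to\!1$ (a vanilla batch EM step on $\Vis^{(t)}$ with no inertia) and $\eta^{(t)}\!\to\!0\Leftrightarrow\gamma^{(t)}\!\to\!0$ (parameters unchanged), which serves as a consistency check on the normalization bookkeeping.
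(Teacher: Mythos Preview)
Your proof is correct and follows essentially the same route as the paper: expand both objectives into expected complete-data log-likelihoods, identify the inertia term with $-\hat{Q}_t(\Thett)$ up to a constant, and match the two via the learning-rate dictionary $\gamma^{(t)}=\eta^{(t)}/(1+\eta^{(t)})$ (equivalently the paper's $\eta^{(t)}=\eta_t/(1-\eta_t)$). The one place you are more explicit than the paper is in justifying why Capp\'e's recursively-defined $\hat{Q}_t(\Thett)$ coincides with $\EE_{\P(h,v|\Theta^{(t)})}[\log \P(h,v|\Thett)]$: you derive it from exponential-family moment matching (first-order optimality of the previous M-step forces $\hat{s}^{(t)}=\nabla A(\Theta^{(t)})=\EE_{\P(h,v|\Theta^{(t)})}[s(h,v)]$), whereas the paper simply states this identity as if it were the definition of $\hat{Q}_t$ without spelling out why the running average agrees with the current model's expectation.
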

The proof is given in the appendix.

The formulation in~\citep{cappe} is specifically applied to models where the complete data likelihood belongs to an exponential family and the updates are shown to reduce to stochastic approximation of the sufficient statistics. While their approach is applicable to simpler models such as mixture of Poisson, identifying the sufficient statistics immediately becomes tedious for more complex models such as HMMs and Kalman filters. As a result, the corresponding updates for these models had not been discovered. Moreover, the decrement of the negative log-likelihood over the current observation $\Vis^{(t)}$ is not evident in this formulation.
\begin{corollary}
    Under mild assumptions on the parameter space of the exponential family model and using a decaying learning rate $0 < \eta^{(t)} < 1, \sum_{t=1}^\infty  \eta^{(t)} = \infty$ and $ \sum_{t=1}^\infty  \eta^{(t)} < \infty$, the update~\eqref{eq:online-em} almost surely converges to a stationary point that maximizes the expected log-likelihood w.r.t. the data distribution.
\end{corollary}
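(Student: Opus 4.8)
The plan is to use Theorem~\ref{obs:cappe} to transfer the problem onto the stochastic-approximation analysis of~\cite{cappe}. Theorem~\ref{obs:cappe} shows that the objective in~\eqref{eq:online-em} equals the online-EM objective of~\cite{cappe} up to terms that are constant in $\Thett$; hence the two objectives share the same minimizer at every round, so the iterate sequence $\{\Theta^{(t)}\}$ produced by~\eqref{eq:online-em} coincides exactly with the one produced by Cappé's recursion. It therefore suffices to establish almost-sure convergence for the latter, which I would do by casting it as a Robbins--Monro scheme and invoking the ODE method.

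First I would make the exponential-family structure explicit. For a complete-data likelihood in an exponential family, minimizing the upper-bound in the M-step has the closed form $\Theta^{(t+1)} = \bar\theta\big(s^{(t+1)}\big)$, where $s^{(t)}$ is a running estimate of the expected sufficient statistic obeying $s^{(t+1)} = s^{(t)} + \eta^{(t)}\big(\bar s(\Vis^{(t)},\Theta^{(t)}) - s^{(t)}\big)$, and $\bar s(\vis,\Theta) = \EE_{\P(h|\,\vis,\Theta)}[\,\text{suff.\ stat.}\,]$ is the posterior-averaged statistic from the E-step. Rewriting this as $s^{(t+1)} = s^{(t)} + \eta^{(t)}\big(g(s^{(t)}) + \xi^{(t)}\big)$ exhibits a stochastic approximation whose mean field $g(s) = \EE_{\Pdata}\!\big[\bar s(\vis,\bar\theta(s))\big] - s$ has fixed points coinciding with the stationary points of the expected log-likelihood under $\Pdata$, and whose noise $\xi^{(t)}$ is a martingale difference w.r.t.\ the natural filtration.

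Next I would verify the hypotheses of the ODE method (as used in~\cite{wang}): the Robbins--Monro step-size conditions $\sum_t \eta^{(t)} = \infty$ and $\sum_t (\eta^{(t)})^2 < \infty$; local Lipschitzness of $g$; a bounded conditional-second-moment condition on $\xi^{(t)}$; and the fact that the expected log-likelihood under $\Pdata$ serves as a Lyapunov function for the mean ODE $\dot s = g(s)$, so that its stable invariant set is precisely the set of stationary points of the expected log-likelihood. The standard stochastic-approximation convergence theorem then yields $s^{(t)} \to$ that set almost surely, and by continuity of $\bar\theta$ the same holds for $\Theta^{(t)} = \bar\theta(s^{(t)})$.

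The main obstacle is \emph{stability}: guaranteeing that the iterates almost surely remain in a compact subset of the parameter space, which is the precondition of the ODE method. This is exactly what the ``mild assumptions on the parameter space'' supply --- closedness and convexity of the natural-parameter domain, boundedness of the posterior-expected statistics, and a confinement (or projection) argument preventing the recursion from escaping to the boundary. Once boundedness is secured, identifying the limit with the stationary points of the expected log-likelihood follows routinely from the Lyapunov argument, completing the proof.
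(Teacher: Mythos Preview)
Your proposal is correct and takes the same route as the paper: invoke Theorem~\ref{obs:cappe} to identify the iterates with those of \cite{cappe} and then inherit their almost-sure convergence theorem. The paper in fact writes no separate proof for the corollary, treating it as immediate from this identification; your sketch of the Robbins--Monro/ODE argument simply unpacks the machinery that \cite{cappe} already supplies.
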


\subsection{NATURAL GRADIENT APPROXIMATION}
For small $\mathrm{d}\Thett \coloneqq \Thett - \Theta^{(t)}$
the inertia term can be approximated as
\begin{equation}
    \label{eq:fisher}
    \Delta_{\text{\tiny RE}}(\Theta^{(t)},\Thett) \approx\sfrac{1}{2}\, \mathrm{d}\Thett^\top\,I_{\text{F}}(\Theta^{(t)})\, \mathrm{d}\Thett \, ,
\end{equation}
where $I_{\text{F}}(\Theta^{(t)}) = -\EE_{\P(\hid, \vis,\Theta^{(t)})} \big[\nabla_{\Theta}^2\log \P(\hid,\vis|\,\Theta^{(t)})\big]$ is the Fisher information matrix. Using~\eqref{eq:fisher}, the update~\eqref{eq:online-em} can be approximated as
\[
    \Theta^{(t+1)} \approx \Theta^{(t)} - I^{-1}_{\text{F}}(\Theta^{(t)})\,\nabla\UP_{\Theta^{(t)}}(\Theta^{(t)}|\,\Vis^{(t)})\, ,
\]
which is called the natural gradient update~\cite{amari} and resembles the gradient-based updates for EM proposed in~\cite{embook} where the observed Fisher $I_{\text{O}}(\Theta^{(t)}) = -\EE_{\P(\vis|\,\Theta^{(t)})} \big[\nabla_{\Theta}^2\log \P(\vis|\,\Theta^{(t)})\big]$ is used in place of $I_{\text{F}}(\Theta^{(t)})$. Using the equality,
\begin{align*}
    \nabla&\UP_{\Theta^{(t)}}(\Theta^{(t)}|\,\Vis^{(t)})\\
    & = -\sfrac{1}{N}\,\sum_n \EE_{\P(h|\,v_n, \Theta^{(t)})}\bigg[
        \frac{\nabla \P(h, v_n|\,\Theta^{(t)})}{\P(h, v_n|\,\Theta^{(t)})}\bigg]\\
    & = -\sfrac{1}{N}\, \sum_n \int_{\hid} \frac{\nabla \P(h, v_n|\,\Theta^{(t)})}{\P(v_n|\,\Theta^{(t)})} = \nabla \nll{\Vis^{(t)}}{\Theta^{(t)}}\, ,
\end{align*}
we have,
\[
    \Theta^{(t+1)} \approx \Theta^{(t)} - I^{-1}_{\text{F}}(\Theta^{(t)})\,\nabla \nll{\Vis^{(t)}}{\Theta^{(t)}}\, .
\]
This connection was also observed in~\cite{sato} and the extension~\cite{cappe}.

\subsection{FINITE-SAMPLE APPROXIMATION}
In cases where $\Delta_{\text{\tiny RE}}(\Theta^{(t)}, \Thett)$ does not yield a closed-form solution, an approximate inertia term can be obtained via a finite number of samples as
\begin{align}
    &\Delta_{\text{\tiny RE}}(\Theta^{(t)},\Thett)\nonumber\\
    &\,\,= \EE_{\P(\vis|\,\Theta^{(t)})}\bigg[\EE_{\P(\hid|\,\vis,\Theta^{(t)})}\Big[\log\frac{\P(\hid,\vis|\Theta^{(t)})}{\P(\hid, \vis|\Thett)}\Big]\bigg]\\
    &\,\, \approx -\underbrace{\sfrac{1}{N^\prime}
    \sum_{n^\prime}\EE_{\P(h|\,v_{n^\prime},\Thett)}
        \bigg[\!\log\P(v_{n^\prime},h|\Thett)\bigg]}_{\UP_{\Theta^t}(\Thett|\, \Vis^\prime) + \text{const.}} +\, \text{const.}\nonumber
\end{align}
where the samples $\Vis^\prime = \{v_{n^\prime}\}_{n^\prime = 1}^{N^\prime}$
are drawn from the distribution~$\P(v|\,\Theta^{(t)})$, not the data distribution $\Pdata(\vis)$. Thus, the samples $\Vis^\prime$ may be seen as pseudo-observations
that encourage the solution to remain close to the current model parameters $\Theta^{(t)}$. We will use this sampled form of the inertia term to derive update for the compound Dirichlet model. The sampled form is also similar to the update given in~\citep{neal}. However, in their formulation, $\Vis^\prime$ is replaced with $\Vis - \{\vis_i\}$ where $\vis_i$ is a random sample from $\Vis$ and the upper-bound is formed at $\Theta^{(t-1)}$ instead of $\Theta^{(t)}$.

\section{UPDATES WITH CLOSED-FORM} \label{sec:examples}
The objective is easier to minimize when it reduces to a linear
combination of negative log-likelihoods of exponential
family distributions which includes mixtures of exponential
families, HMMs and Kalman Filters. Note that the latter two cases
are already hard to handle with the methodology of
\cite{cappe}. We start with some background on this family of
distributions. The exponential family~\cite{wainwright}
with vector of sufficient statistics $\phi(\x)$ and natural parameter $\thetb$ is defined as
$\P_G(\x|\thetb) = \exp( \thetb\cdot \phi(\x) - G(\thetb)), $
where $G(\thetb) = \log \int_{\x} \exp\big(\thetb\cdot \phi(\x)\big)$
is called the log partition function that ensures that $\P_G(\x|\,\thetb)$ integrates to one.
The expectation parameter
$\mub = g(\thetb) = \int_{\x} \phi(\x)\, \P_G(\x|\,\thetb)$
is the dual~\cite{urruty} of the natural parameter $\thetb$ where $g(\thetb) \defeq \nabla G(\thetb)$.
The duality implies $ \thetb = g^*(\mub) = g^{-1}(\mub)$ where $g^*(\mub) \coloneqq \nabla G^*(\mub)$ and $G^*(\mub) = \sup_{\thetb^\prime}\{\mub\cdot\thetb^\prime - G(\thetb^\prime)\}$ is the convex conjugate of $G$.
It is easy to show that $G(\thetb)$ is a convex function. In fact, the relative entropy divergence between two exponential distributions (of the same form) with parameters $\thetb$ and $\thetbt$ yields
\begin{align*}
    \int_{\x} \P_G(\x|\,\thetb) \log
    \frac{\P_G(\x|\,\thetb)}{\P_G(\x|\,\thetbt)} =
    \Delta_G(\thetbt,\thetb) = \Delta_{G^*}(\mub, \mut)\, ,
\end{align*}
where 
$
\Delta_G(\ts, \theta) =  G(\ts) - G(\theta) -
g(\theta)\cdot (\ts - \theta)\, ,
$
is the \emph{Bregman divergence}~\cite{bregman} induced
by the convex function $G(\cdot)$ and $\frac{\partial}{\partial \ts}\, \Delta_G(\ts, \theta) = g(\ts) - g(\thetb)$.
The following lemma will be useful for deriving the updates.
    \begin{lemma}
    \label{lem:partial-comb}
    For $\{\alpha_m\}_{m=1}^M$ s.t. $\alpha_m \in \RR_+$ and $\sum_m \alpha_m > 0$,
        \begin{align*} \theta_{opt} & = \argmin_{\ts} \sum_m \alpha_m \big(G(\ts)- \ts \cdot \mu_m \big)\\
            & = g^{-1}{\scriptstyle \Big(\frac{\sum_m \alpha_m\, \mu_m}{\sum_m \alpha_m}\Big),}
    \end{align*}
    i.e. $ \mu_{opt} = \frac{\sum_m \alpha_m\,
	\mu_m}{\sum_m \alpha_m}.$
    \end{lemma}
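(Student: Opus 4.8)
The plan is to recognize the objective as an unconstrained convex program in $\ts$ and solve it by a first-order condition. First I would exploit that $G$ carries no dependence on the index $m$, so the sum collapses:
\[
    f(\ts) \defeq \sum_m \alpha_m\big(G(\ts) - \ts\cdot\mu_m\big) = \Big(\sum_m \alpha_m\Big)\,G(\ts) \;-\; \ts\cdot\Big(\sum_m \alpha_m\,\mu_m\Big).
\]
Since each $\alpha_m \in \RR_+$ and $\sum_m \alpha_m > 0$, the leading coefficient is strictly positive, so $f$ is a positive multiple of the convex function $G$ plus a linear term, hence convex. Consequently any stationary point is automatically a global minimizer, and it suffices to locate one.

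Next I would differentiate, using $\nabla G = g$, to obtain
\[
    \nabla f(\ts) = \Big(\sum_m \alpha_m\Big)\,g(\ts) - \sum_m \alpha_m\,\mu_m .
\]
Setting $\nabla f(\ts)=0$ and dividing by $\sum_m \alpha_m>0$ yields the first-order condition $g(\theta_{opt}) = \big(\sum_m \alpha_m\,\mu_m\big)/\big(\sum_m \alpha_m\big)$, which is precisely the asserted value of $\mu_{opt}$. Inverting via the duality $\thetb = g^{-1}(\mub) = g^*(\mub)$ recorded above the lemma then gives the claimed closed form $\theta_{opt} = g^{-1}\!\big((\sum_m \alpha_m\,\mu_m)/(\sum_m \alpha_m)\big)$.

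The hard part—really the only subtlety—is ensuring that the stationary point exists and that $g$ may legitimately be inverted at the prescribed argument. Both follow from the structure of a regular (minimal) exponential family: there $g=\nabla G$ is a bijection from the natural parameters onto the (relative interior of the) set of expectation parameters, with inverse $g^{-1}=g^*=\nabla G^*$. The point $(\sum_m \alpha_m\,\mu_m)/(\sum_m \alpha_m)$ is a convex combination of the valid expectation parameters $\mu_m$, and since that set is convex the combination is itself a valid expectation parameter, hence lies in the image of $g$; this is exactly what guarantees $g^{-1}$ is defined there and that the convex minimization is not degenerate or unbounded. The assumption $\sum_m \alpha_m>0$ is what rules out the trivial all-zero weighting. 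Everything else reduces to the one-line gradient computation above.
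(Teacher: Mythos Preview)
Your argument is correct and follows essentially the same route as the paper: differentiate the convex objective, set the gradient to zero, and invert $g$ to read off $\theta_{opt}$ (the paper phrases this in the appendix via the gradient identity for Bregman divergences, but the computation is identical). Your additional remarks on convexity and on why $g^{-1}$ is defined at the convex combination are more careful than what the paper states, but they do not change the approach.
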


\subsection{MIXTURE OF EXPONENTIAL FAMILY}
In $k$-mixture of exponential, model $h \in [k]$ is chosen according to the probability $\w_h \defeq \P(h|\,\Theta)$ and the observation is drawn from the corresponding distribution $P(v|\, h, \Theta) =  \P_G(v|\,\thetb_h) = \exp(\thetb_h \cdot \phi(v) - G(\thetb_h))$, which belongs to an exponential family. Thus, the model parameters are $\Theta = \{\w_h, \mub_h(\thetb_h)\}_h$. The joint distribution becomes
$$
\P(v, h|\, \Theta) = \w_h\, \exp(\thetb_h \cdot \phi(v) - G(\thetb_h))\, ,
$$
while the marginal is simply a sum over all states,
$$
\P(v|\, \Theta) = \sum_h \w_h\, \exp(\thetb_h \cdot \phi(v) - G(\thetb_h))\, .
$$
The EM upper-bound can be formed using the posterior distributions of each observation $v_n$, that is,
\begin{align*}
    &\UP_{\Theta}(\Thett|\, \Vis)\\
    & =  -\sfrac{1}{N}\sum_n \sum_h \gamma_{n,h}\, \Big(\log \wt_h + \big(G(\thetbt_h) - \thetbt_h \cdot \phi(v_n)\big)\Big)\, ,
\end{align*}
where we ignored the constants. The posteriors $\gamma_{n,h}$ are calculated as
$$
\gamma_{n,h} = \frac{\w_h\, \exp(\thetb_h \cdot \phi(v_n) - G(\thetb_h))}{\sum_{h^\prime} \w_{h^\prime}\, \exp(\thetb_{h^\prime} \cdot \phi(v_n) - G(\thetb_{h^\prime}))}\, .
$$
The inertia term for the online EM algorithm becomes
\begingroup
\allowdisplaybreaks
 \begin{align*}
        &\Delta(\Theta, \Thett) = \sum_h \int_v \w_h\, \P_G(v|\,\thetb_h)\, \log\frac{\w_h\, \P_G(v|\,\thetb_h)}{\wt_h\, \P_G(v|\,\thetbt_h)}\\
        & = \sum_h \w_h \log\frac{\w_h}{\wt_h} + \sum_h \w_h\, \underbrace{\int_v \P_G(v|\thetb_h) \log \frac{\P_G(v|\thetb_h)}{\P_G(v|\thetbt_h)}}_{\Delta_G(\thetbt_h,\thetb_h)}\, .
    \end{align*}
\endgroup
Combining the inertia term with the upper-bound and applying Lemma~\ref{lem:partial-comb}, we have
\begin{align}
    \wt_h  & = \frac{\sfrac{1}{\eta}\, \w_h + \sfrac{1}{N}\,\sum_n
    \gamma_{n,h}}{\sfrac{1}{\eta} + 1}\, ,\label{eq:our-w}\\
        \mut_h & = \frac{\sfrac{1}{\eta}\, \w_h\, \mub_h +
    \sfrac{1}{N}\,\sum_n \gamma_{n,h}\, \phi(v_n)}{\sfrac{1}{\eta}\, \w_h + \sfrac{1}{N}\,\sum_n \gamma_{n,h}}\, .\label{{eq:our-mu}}
    \end{align}

\subsection{HIDDEN MARKOV MODELS}
A Hidden Markov Model (HMM)~\cite{rabiner} consists of an underlying finite-state (hidden) Markov chain with probability of transitioning from state $h$ to $h^\prime$ equal to $a_{h,h^\prime} \defeq P(h|\, h^\prime, \Theta)$ and an initial state probability equal to $\pi_{h_1} \defeq P(h_1|\, \Theta)$. At every round, the model makes a transition to a new state according to the state transition probabilities and given the new state $h$, generates an observation according to the state emission probability $P(v|\, h, \Theta)$. We make the assumption that the state emission probabilities are members of an exponential family, that is, $P(v|\, h, \Theta) = P_G(v|\, \thetb_h)$. Thus, the model parameters are $\Theta = \{\pi_h, \{a_{h,h^\prime}\}_{h^\prime}, \mub_h(\thetb_h)\}_h$. The joint distribution of the model can be written as
$$
P(v, h|\, \Theta) = \prod_{t=1}^T a_{h_{t-1},h_t}\, P_G(v_t|\, \thetb_{h_t})\, ,
$$
in which, we define $a_{h_0,h_1} \defeq \pi_{h_1}$ and the marginal can be obtained by summing over all the possible hidden states
$$
P(v|\, \Theta) = \sum_{h_1,\ldots,h_T}\, \prod_{t=1}^T a_{h_{t-1},h_t}\, P_G(v_t|\, \thetb_{h_t})\, .
$$
The EM upper-bound can be written as
\begin{align*}
    \UP_{\Theta}(\Thett|\, \Vis) & = \sfrac{1}{N}\,\sum_n\Big(\sum_h\gamma^{n,1}_{h}\log\frac{\gamma^n_{h_1}}{\pit_{h_1}}\\
    & + \sum_{t=1}^{T-1}\sum_{h,h^\prime}  \gamma^{n,t}_{h,h^\prime} \log\frac{\gamma^{n,t}_{h,h^\prime}}{\at_{hh^\prime}}\\
    & + \sum_{t=1}^T \sum_h \gamma^{n,t}_{h}\, \big(G(\thetbt_{h_t}) - \phi(v_{nt})\cdot \thetbt_{h_t}\big) \Big)\, ,
\end{align*}
in which, we again ignore the constant terms. The state posteriors are found using the Baum-Welch algorithm by performing a forward-backward pass~\cite{rabiner}. We define
\begin{align*}
    \gamma^{n,t}_{h} &\defeq P(h_{n,t} = h|\, v_n, \Theta)\, ,\\
    \gamma^{n,t}_{h,h^\prime} &\defeq P(h_{n,t+1} = h^\prime, h_{n,t} = h|\, v_n, \Theta)\, .
\end{align*}
The inertia term can be written as
 \begin{align*}
     \Delta(\Theta, \Thett) & = \sum_{h}
	\pi_{h}\log\frac{\pi_{h}}{\pit_{h}} +
	\sum_h u_h \sum_{h'} a_{h,h'}
	\log\frac{a_{h,h'}}{\at_{h,h'}}\\
     & + \sum_h
	u_h\,\Delta_{G}(\ts_h, \thetb_h)\, ,
        \end{align*}
        where
        $
        \, u_h = \sum_{t=1}^\infty \delta^t_h\, ,
        $
        with\,
        $
        \delta^1_h = \pi_h\,\, \text{, and }\,\, \delta^{t+1}_{h^\prime} = \sum_{h}\delta_h^t\,a_{h,h^\prime}\, .
        $
        In other words, $u_h$ is the expected usage of state $h$. Note that the usage $u_h$ is not finite in general and should be instead approximated by a finite length sequence. However, for the class of \emph{absorbing} HMMs, we can calculate the usages in the exact form. More specifically, the transition matrix $A = [a_{h,h^\prime}]$ of an absorbing HMM with $r$ absorbing states has the following form
        $$
        A = \left[\begin{array}{cc}
                Q & R\\
                0 & I_r\\
        \end{array}
                \right]\, ,
        $$
        where the $Q$ entails the transition probabilities from a transient state to another while $R$ describes the transition probabilities of from transient states to absorbing states. $I_r$ is an identity matrix which describes the transitions from each absorbing state back to itself. The expected usages of the transient states can be calculated as
        $$
        u^\top = \pi^\top + \pi^\top Q + \pi^\top Q^2 + \ldots = \pi^\top (I - Q)^{-1}\, .
        $$
        Additionally, note that for an absorbing state $h$, we always have $\sum_{h^\prime} a_{h,h^\prime}\log\frac{a_{h,h^\prime}}{\at_{h,h^\prime}} = 0$. Therefore, the corresponding terms can be omitted from the inertia term.

Combining the EM upper-bound and the inertia term and applying Lemma~\ref{lem:partial-comb} gives the following updates
\begin{align*}
    \pit_h &= \frac{\sfrac{1}{\eta}\, \pi_h + \sfrac{1}{N}\,\sum_n
    \gamma_h^{n,1}}{\sfrac{1}{\eta}\, + 1}\, ,\\
    \at_{h,h^\prime} & = \frac{\sfrac{1}{\eta}\, u_h\,
    a_{h,h^\prime}+ \sfrac{1}{N}\,\sum_n \sum_t
    \gamma_{h,h^\prime}^{n,t}}{\sfrac{1}{\eta}\, u_h + \sfrac{1}{N}\,\sum_n \sum_t \gamma_h^{n,t}}\, ,\\
    \mut_h &= \frac{\sfrac{1}{\eta}\, u_h\, \mub_h + \sfrac{1}{N}\,\sum_n
    \sum_t \gamma_h^{n,t}\, \phi(v_{n,t})}{\sfrac{1}{\eta}\, u_h + \sfrac{1}{N}\,\sum_n \sum_t \gamma_h^{n,t}}\, .
\end{align*}


\subsection{KALMAN FILTERS}
Kalman filters~\cite{welch} can be described using the following two update equations
       \begin{align*}
        h_{t+1} & = A\, h_t + \rho_t\, ,\\
        v_t & = C\, h_t + \veps_t\, ,
        \end{align*}
where $h_t$ is the underlying (hidden) state at $t$ and $v_t$ is the corresponding output. Both state and observation noise, $\rho_t$ and $\veps_t$, are zero-mean Gaussian random variables with covariance matrices equal to $Q$ and $R$, respectively. The initial state $h_1$ is generally assumed to be drawn from a Gaussian distribution with mean $\pi_1$ and covariance $V$. Thus, the model parameters are $\Theta = \{\pi_1, V, A, C, Q, R\}$. In Kalman filters, only the output is observed and thus, the state as well as the noise variables are hidden.

The joint distribution of a Kalman filter can be written as
$$
\P(h,v|\, \Theta) = \prod_{t=1}^T  \P(h_t|\, h_{t-1}, \Theta)\,  \P(v_t|\, h_t, \Theta)\, ,
$$
where
\vspace{-7mm}
 \begin{align*}
     \P(h_1|\, h_0, \Theta) &\coloneqq \P(h_1|\, \Theta)\\
     \P(v_t|\,h_t,\Theta) & \sim \normal{C\, h_t}{R}\,
     ,\\
           \P(h_{t+1}|\,h_t,\Theta) & \sim \normal{A\, h_t}{Q}\, ,\\
         \P(h_1|\,\Theta) & \sim \normal{\pi_1}{V}\, .
    \end{align*}
Here, $\normal{\xi}{\Sigma}$ denotes a Gaussian probability density with mean and covariance equal to $\xi$ and $\Sigma$, respectively. The marginal can be obtained by integrating over all the state variables, that is,
$$
P(v|\, \Theta) = \int_{h_1,\ldots,h_T}\, P(h,v|\, \Theta) \, .
$$
Forming the EM upper-bound requires calculating the posteriors. Note that because all the random variables are Gaussian, it suffices to keep track of the means and covariances. The posteriors depend on the following three expectations
$$  \xh_t  \defeq \EE[h_t|v],\; P_t \defeq \EE[h_t
    h_t^\top| v],\;
    P_{t,t-1}  \defeq \EE[h_t h_{t-1}^\top| v],$$
which can be calculated recursively using the Kalman filtering and Kalman smoothing equations~\cite{ghahramani}. Thus, the EM upper-bound can be written as
\begin{align*}
    2\times \UP_{\Theta}(\Thett|\,\Vis) & = \sfrac{1}{N}\, \sum_n \bigg(\tr(\Vt^{-1} \Vb_n) + \log\vert\Vt\vert\\
    & + \sum_{t=2}^T\, \tr(\Qt^{-1}
    \Qb_{n,t}) + (T-1)\,\log\vert\Qt\vert\\
    & + \sum_{t=1}^T \tr(\Rt^{-1} \Rb_{n,t}) + T\,\log\vert\Rt\vert\bigg)\, ,
    \end{align*}
    where\footnote{Note that $n \in [N]$ denotes the observation index.}
    \begin{align*}
        \Vb_n & = P^n_1 - \pit_1 \xh_{n,1}^\top -  \xh_{n,1}\pit_1^\top + \pit_1\pit_1^\top\, ,\\
    \Qb_{n,t} & = P^n_t - \At P^n_{t-1,t} - P^n_{t,t-1}\At^\top + \At P^n_{t-1} \At^\top\, ,\\
        \Rb_{n,t} & = v_{n,t} v_{n,t}^\top - \Ct \xh_{n,t} v_t^\top - v_{n,t}\xh_{n,t}^\top \Ct^\top + \Ct P_t \Ct^\top\, .
\end{align*}
Again, our inertia term for the online EM algorithm is relative entropy between the joints, assuming a fixed observation length equal to $T$, that is,
\begin{align*}
    & 2\times \Delta(\Theta, \Thett)\\
    & = \tr\big(\Vt^{-1}(\pi_1 - \pit_1)(\pi_1 - \pit_1)^\top\big) + \, D_{ld}(V, \Vt)\\
    & + \tr\big(\Qt^{-1} (A\!-\!\At)\!
    \sum_{t=1}^{T-1} U_t (A\! -\! \At)^\top\big) +
    (T\!-\!1)\, D_{ld}(Q,\! \Qt)\\
    &  + \,\tr\big(\Rt^{-1} (C\! -\! \Ct)\!
    \sum_{t=1}^{T} U_t (C\! -\! \Ct)^\top \big) +
    T\, D_{ld}(R,\! \Rt)\, ,
\end{align*}
where $U_1 = V + \pi_1\pi_1^\top$ and $U_{t+1} =  Q + A\, U_t\, A^\top.$
Moreover $ D_{ld}(X,Y) = \tr(X Y^{-1}) - \log\vert X Y^{-1}\vert - d,$
    is the log-determinant divergence~\cite{nmf}.

    Combining the inertia term with the EM upper-bound and setting the derivatives wrt the parameters to zero yields
\begingroup
\allowdisplaybreaks
\begin{align*}
    \pit_1 & = \frac{\sfrac{1}{\eta}\, \pi_1 + \sfrac{1}{N}\, \sum_n \xh_{n,1}}{\sfrac{1}{\eta} + 1}\, ,\\
    \Vt & = \frac{\sfrac{1}{\eta}\big(V + (\pi_1 - \pit_1)(\pi_1 - \pit_1)^T)\big) + \Vb}{\sfrac{1}{\eta} + 1}\, ,\\
    \At & = \Big(\sfrac{1}{\eta} \sum_{t=2}^{T-1} A\, U_t + \sum_{t=2}^T P_{t,t-1}\Big)\, S_{T-1}^{-1}\, ,\\
    \Ct & = \Big(\sfrac{1}{\eta} \sum_{t=1}^T C\, U_t + \sfrac{1}{N}\,\sum_n\sum_{t=1}^T v_{n,t}\,\xh_{n,t}^\top\Big)\, S_T^{-1}\, ,\\
           \Qt & = \frac{\sfrac{1}{\eta}\big(Q + \Delta A_U \big) + \sfrac{1}{\big(N\,(T-1)\big)}\sum_n\sum_{t=2}^T \Qb_{n,t}}{\sfrac{1}{\eta} + 1}\, ,\\
     \Rt & = \frac{\sfrac{1}{\eta}\big(R + \Delta C_U \big)
    + \sfrac{1}{(N\,T)}\, \sum_n\sum_{t=1}^T \Rb_{n,t}}{\sfrac{1}{\eta} + 1}\, ,\
    \end{align*}

    \vspace{2mm}
     where
     \vspace{-10.5 mm}
     \begin{align*}
         S_T & = \sum_{t=1}^T \big(\sfrac{1}{\eta}\, U_t + P_t \big)\, ,\\
  \Delta A_U & = \sfrac{1}{(T-1)}\,\sum_{t=2}^{T} (A-\At)\, U_t\, (A - \At)^\top\, ,\\
    \Delta C_U &=  \sfrac{1}{T}\,\sum_{t=1}^{T} (C-\Ct)\, U_t\, (C - \Ct)^\top.
    \end{align*}
\endgroup

\begin{remark}
\label{obs:nll}
For mixtures of exponential families, HMMs, and Kalman
    filters the following holds:
    for a sufficiently small learning rate $\eta$,
the negative log-likelihood wrt the underlying unknown distribution $P_{\text{\tiny UK}}(v)$,
$$
    \EE_{\Pdata}\Big[\log P(v|\,\Theta)\Big] = \int_v P_{\text{\tiny UK}}(v)\, \log \int_h P(h, v|\,\Theta)\, ,
$$
improves after each update.
\end{remark}
This is a direct result of applying Proposition 3 in~\citep{cappe}. See~\citep{cappe} for further details.

\section{UPDATES WITHOUT CLOSED-FORM}

The minimization problem of the M-step of batch EM does not
always have a closed-form solution.
In those cases, it is likely the divergence term between
the models also does not have a closed-form either.
An example of such a model is the compound Dirichlet distribution~\cite{gupta}.
In this case, applying the online EM updates in form of~\eqref{eq:online-em} is infeasible.
However, we can use the finite sample form of the inertia term.
That is, in each iteration, we draw $N^\prime$ samples $\Vis^\prime = \{v_{n^\prime}\}_{n^\prime = 1}^{N^\prime}$ from $P(v|\, \Theta^t)$ and form the corresponding EM upper-bound by treating the additional samples as pseudo-observations. The update can be achieved by (numerically) minimizing the combined upper-bounds,
\begin{equation*}
\label{eq:online-em-app}
\Theta^{t+1} \approx \argmin_{\Thett}\,
    \UP_{\Theta^t}(\Thett|\,\Vis^t)	+\sfrac{1}{\eta}\, \UP_{\Theta^t}(\Thett|\, \Vis^\prime)
 \, ,
\end{equation*}
where again $\eta > 0$ is a learning rate parameter.
Note that this is fundamentally different than combining the samples $\Vis^t \cup \Vis^\prime$ and forming a single upper-bound. In fact, combining the samples may require larger number of pseudo-samples as we proceed with the online updates (which corresponds to a decaying learning rate) while our approach can be carried out by a fixed number of samples at every iteration.
Note that the quality of the approximation of the inertia term depends
on the size of the pseudo sample $N^\prime$.
Approximating the inertia term leads to a higher variance.
This variance decreases with $N^\prime$ but comes at a cost.
We will show an experimental result on the online parameter estimation of compound Dirichlet distribution.

\begin{figure*}[t!]
\vspace{-0.5cm}
\begin{center}
    \subfigure[]{\includegraphics[width=0.32\textwidth]{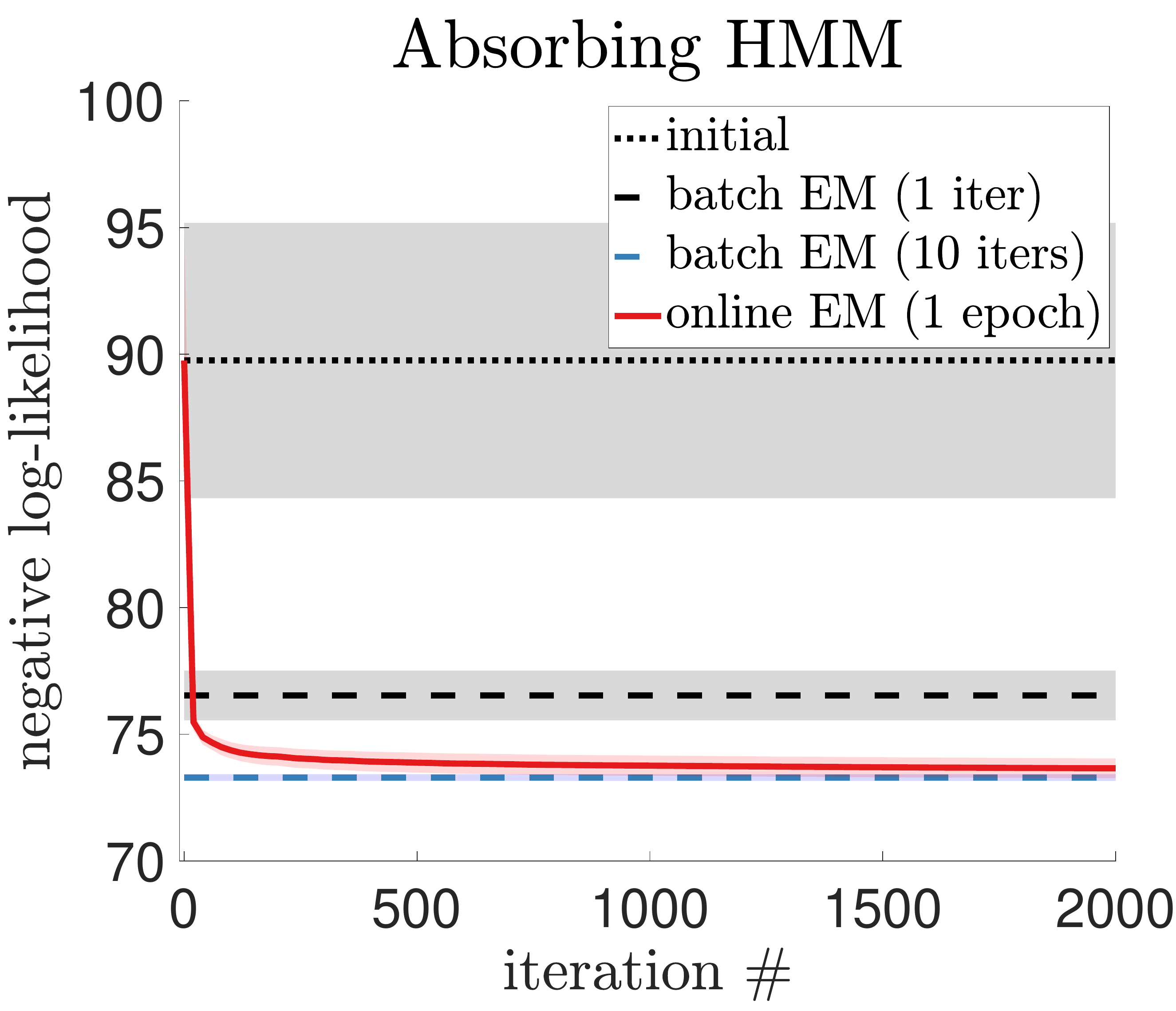}}\,
    \subfigure[]{\includegraphics[width=0.32\textwidth]{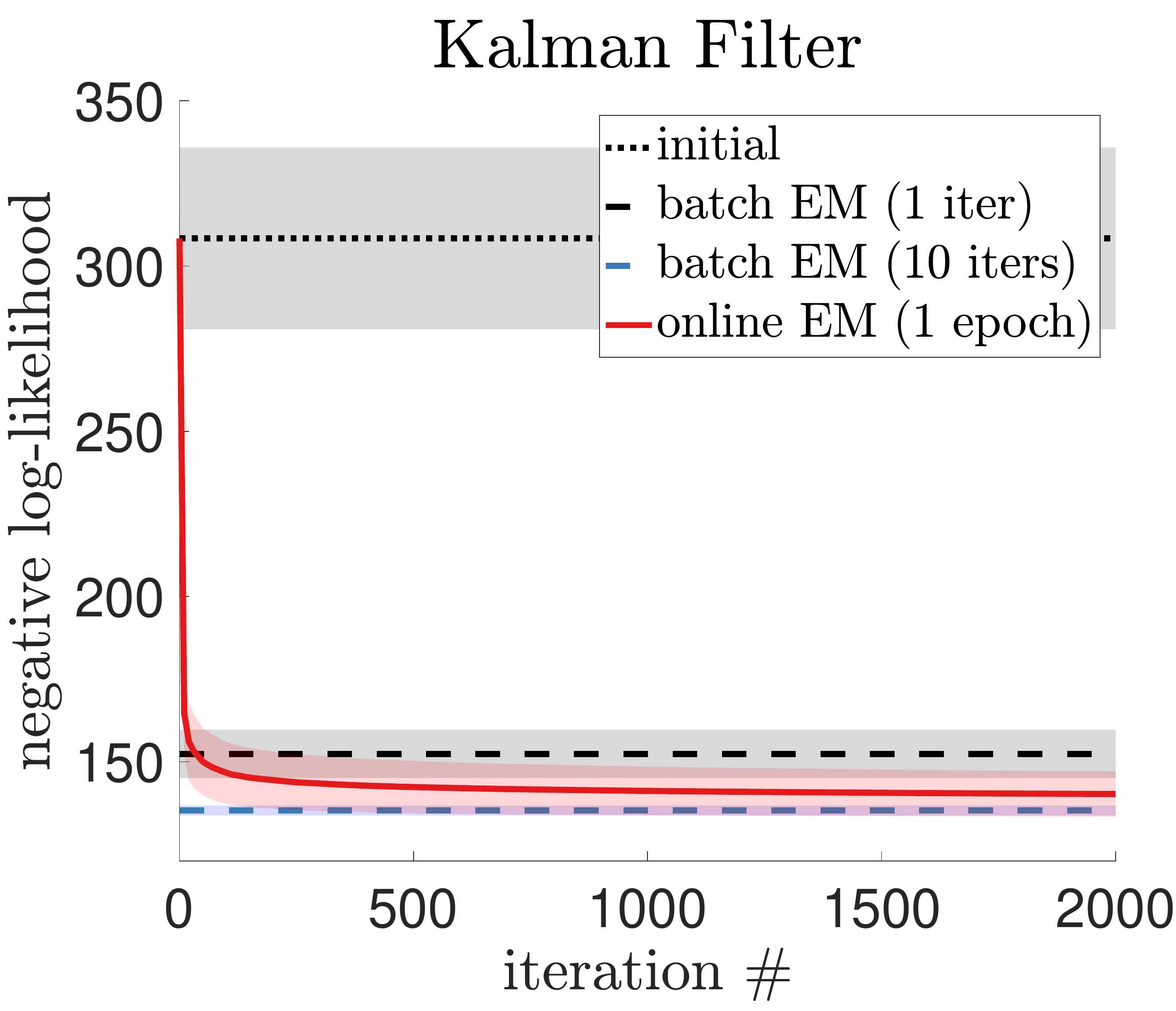}}\,
    \subfigure[]{\includegraphics[width=0.32\textwidth]{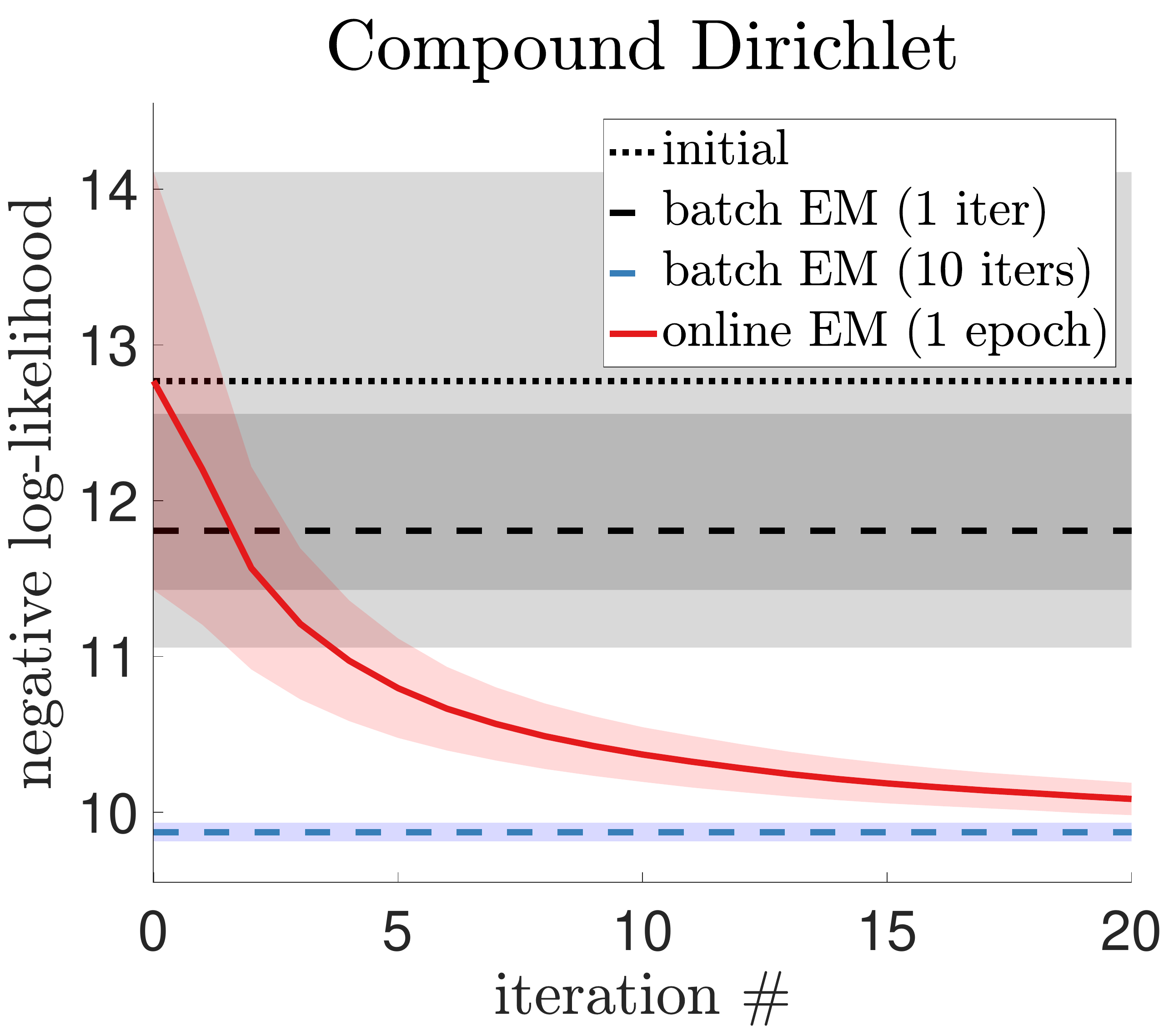}}\,
    \vspace{-0.45cm}
     \caption{Online EM results: a) absorbing HMM, b) Kalman filter, and c) compound Dirichlet distribution.}\label{fig:results}
\end{center}
    \vspace{-0.75cm}
\end{figure*}

\section{COMBINING MODELS}
\label{s:combo}

In many cases, multiple \emph{local models} need to be combined to form a \emph{global model}. For instance due to the large amount of data, the model training is distributed over multiple machines where each machine only receives a subset of the dataset and performs updates on its local model. The local models are then combined into a single global model at the end of each iteration (synchronous) or the end of the training process (asynchronous). Our divergences between the hidden variable models provide a natural way of combining the local models in a distributed setting.
More formally, let $\Theta^{(m)}$ denote the set of local parameters of model
$m \in [M]$. We can define the combined model parameters $\Theta^{(\text{comb})}$ as
\begin{equation}
    \label{eq:comb}
    \Theta^{(\text{comb})} = \argmin_{\Thett} \sum_{m\in [M]}\, \alpha_m\, \Delta_{\text{\tiny RE}}\big(\Theta^{(m)}, \Thett\big)\, ,
\end{equation}
where $\alpha_m \geq 0$ is the associated weight for combining model $m$ (s.t. $\sum_m \alpha_m > 0$. The value of $\alpha_m$ can be tuned based on the amount of data seen by model $m$, accuracy of the solver, etc. For the exponential family models, updates in~\eqref{eq:comb} reduces to a convex combination of the complete data sufficient statistics of the models\footnote{And not the sufficient statistics of the components.}. As an example, for hidden Markov models, Equation~\eqref{eq:comb} yields
\begin{align*}
 \pi^{(\text{comb})}_h &= \frac{\sum_m \alpha_m \pi^{(m)}_h}{\sum_m \alpha_m}\, ,\\
    a^{(\text{comb})}_{h,h^\prime} & = \frac{\sum_m \alpha_m u^{(m)}_h\,
    a^{(m)}_{h,h^\prime}}{\sum_m \alpha_m u^{(m)}_h}\, ,\\
    \mu^{(\text{comb})}_h &= \frac{\sum_m  \alpha_m u^{(m)}_h\, \mu^{(m)}_h}{\sum_m \alpha_m u^{(m)}_h}\, .
\end{align*}
We experimentally show that combining the models
via~\eqref{eq:comb} provides improved results compared to
the commonly used methods of combining the models via
other ways of averaging~\cite{open}. 

Again for cases where the divergence between the model does not admit a closed-form, we can use the sampling form of the divergence to combine the models. That is, we draw $N^\prime_m$samples $\Vis^\prime_m$ from $P(v|\, \Theta^{(m)})$ and form $\UP_{\Theta^{(m)}}(\Thett|\, \Vis^\prime_m)$. The combined model can be obtained as
$$
\Theta^{(\text{comb})} = \argmin_{\Thett} \sum_m\, \alpha_m\, \UP_{\Theta^{(m)}}(\Thett|\, \Vis^\prime_m)\, .
$$


\section{EXPERIMENTS}

In this section, we first conduct experiments on online
learning of absorbing HMMs and Kalman filters. Next, we
apply the approximate form of the inertia term for
estimating a compound Dirichlet distribution in which the
updates (as well as the inertia term) do not have a closed
form solution. Finally, we consider learning of
Gaussian mixture models in a distributed setting. In all experiments, we use a decaying learning rate of the form $\eta = \eta_0/t^\beta$ where $t$ is the iteration number and $\eta_0 > 0$ and $0.5 < \beta < 1$ are specified for each case. We repeat each experiment over $20$ random initializations.

\subsection{ABSORBING HMM}
We validate the derived updates for HMMs by conducting an experiment on estimating the parameters of an absorbing HMM with $3$ transient and a single absorbing state ($4$ hidden states in total) and Gaussian emission probabilities of dimension $4$. We consider \num{2000} samples from the model and apply batch EM updates as well as online updates with $(\eta_0, \beta) = (0.5, 0.9)$. The results are shown in Figure~\ref{fig:results}-a.
The online algorithm processes one observation per iteration.
Regarding processing time, one pass of the online update
over the entire data set (called one epoch) is comparable to a single batch
update. The online update rapidly outperforms the single batch EM update after
around $30$ iterations, and at the end of the first epoch converges to a value close
to the loss of $10$ batch EM iterations. Also the online updates are stable
to using lower or higher learning rates: The final loss
values obtained for $\eta_0 = 0.1$ and $\eta_0 = 1.0$ are
$77.17$ and $72.27$, respectively (not shown in the figure). For comparison, we also apply a gradient based update based on~\cite{cappe-hmm}. The gradient based updates are extremely unstable and best final result obtained is $83.84$ (also not shown in the figure).

\begin{figure*}[t!]
\vspace{-0.5cm}
\begin{center}
    \subfigure[]{\includegraphics[width=0.32\textwidth]{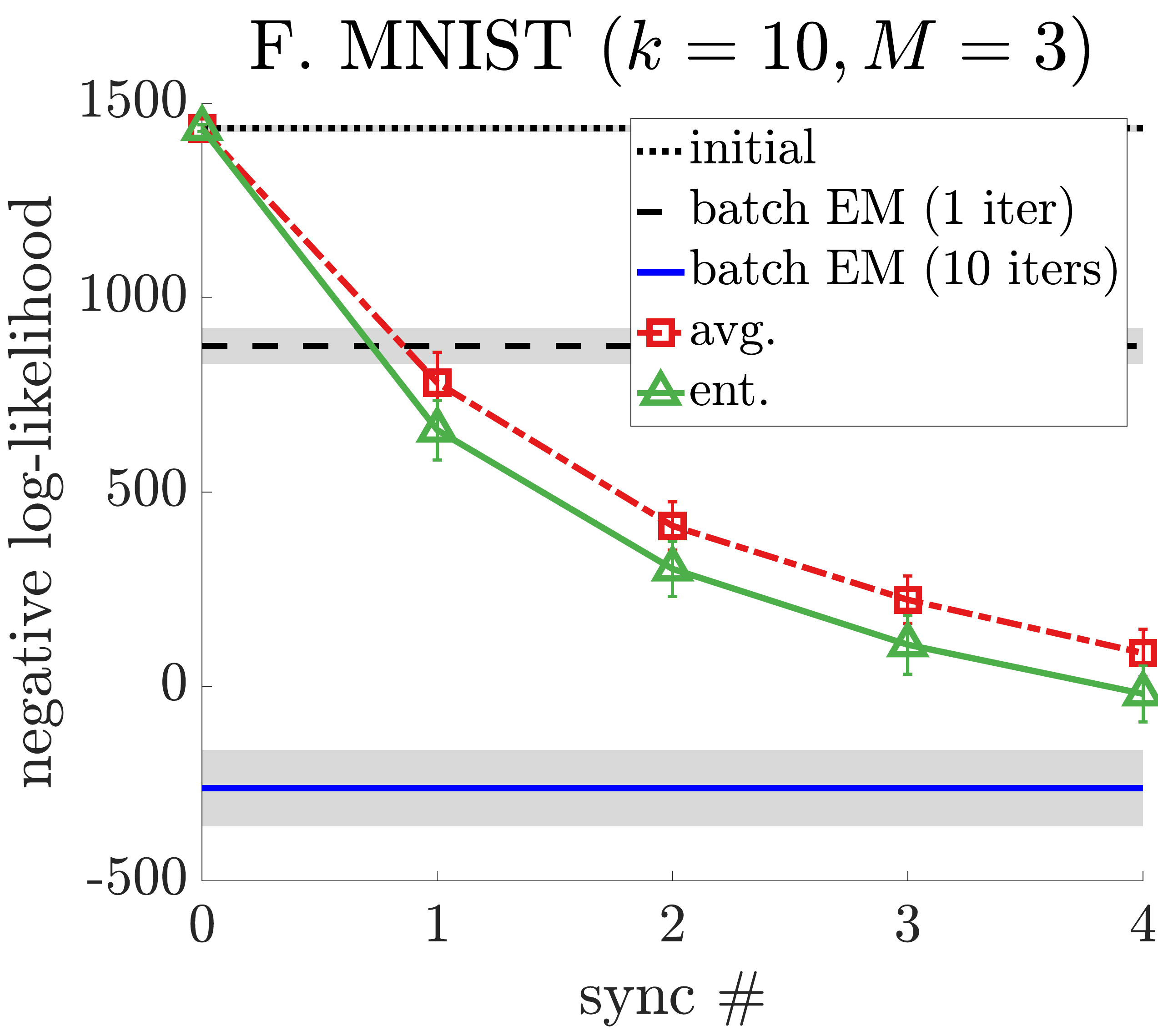}}\,
    \subfigure[]{\includegraphics[width=0.32\textwidth]{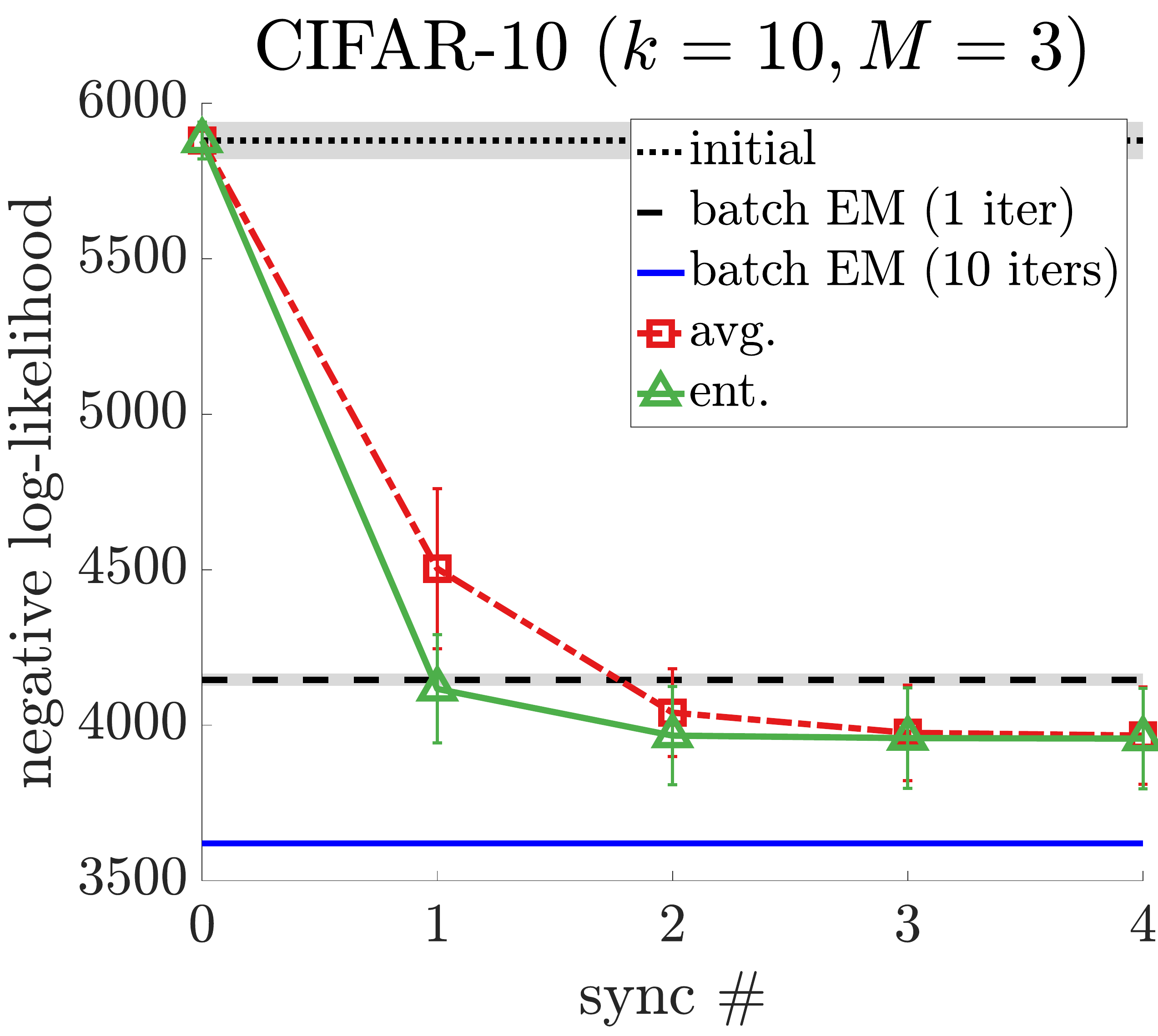}}\,
    \subfigure[]{\includegraphics[width=0.32\textwidth]{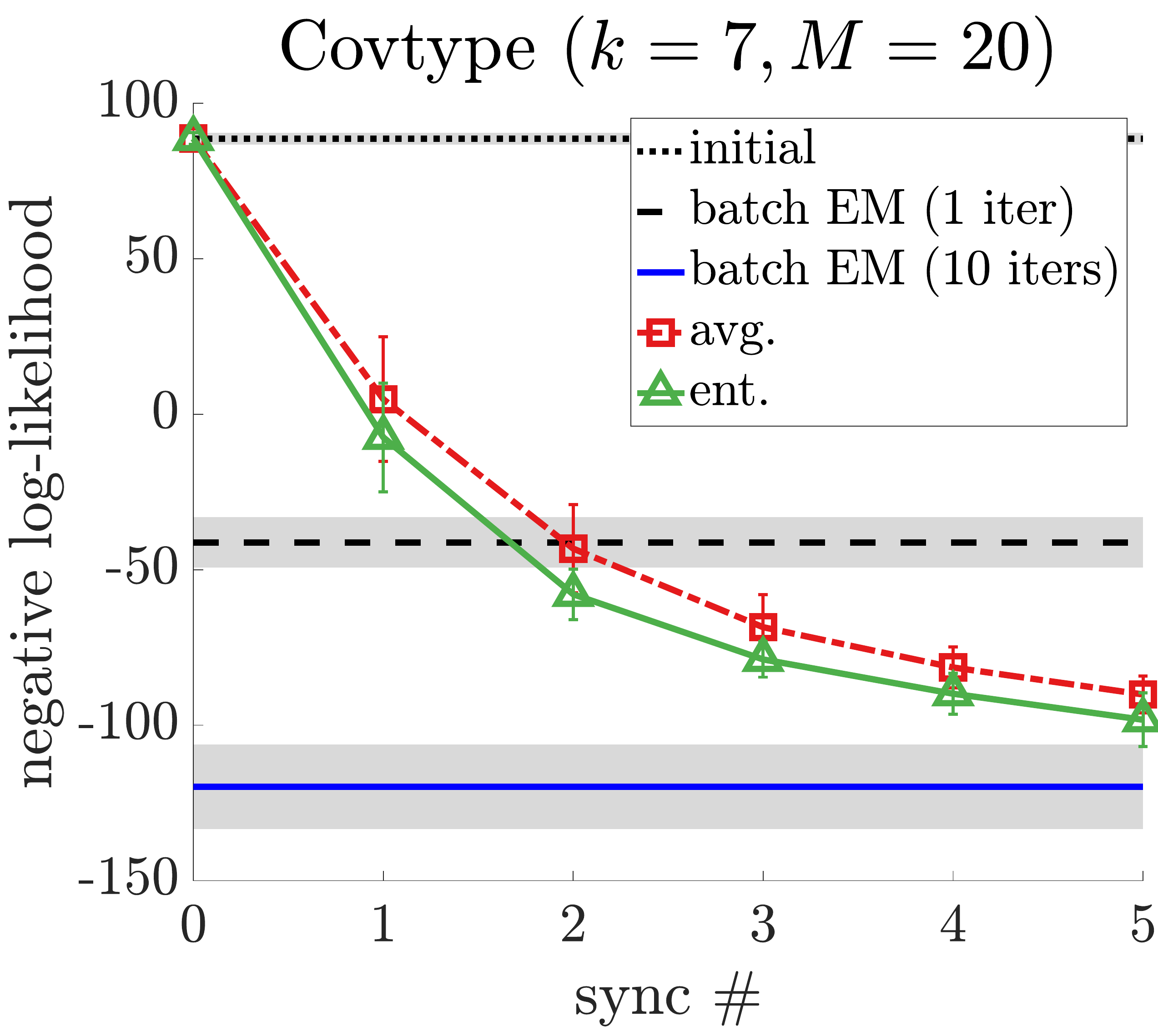}}\,
    \vspace{-0.45cm}
     \caption{Combining models: results using simple averaging of the means and covariance matrices (avg.) compared to our proposed model averaging (ent.) on (a) Fashion MNIST, (b) CIFAR-10, and (c) Covtype datasets. The number of mixtures $k$ as well as the number of machines $M$ is shown on top of each plot. The sync step happens after every \num{5000} iterations of online EM on each machine.}\label{fig:dist}
\end{center}
    \vspace{-0.75cm}
\end{figure*}

\subsection{KALMAN FILTER}
To validate the correctness of the updates for Kalman filters, we consider online estimation of the parameters of a Kalman filter with hidden state vector of dimension $5$ and observation vector of dimension $10$. We assume that the noise covariances $Q$ and $R$ are known and consider estimating the remaining parameters, i.e. $\{\pi_1, V, A, C\}$. We apply the batch EM updates as well as the online updates with parameters $(\eta_0, \beta) = (1.0,0.9)$. The results are shown in Figure~\ref{fig:results}-b. Again, the online updates outperform the solution of one batch EM after around $40$ iterations and converge to a solution with a loss close to $10$ batch EM updates. Moreover, the updates are stable wrt the initial learning rate $\eta_0$. The final value of the negative log-likelihood of the model obtained using $\eta_0 = 0.1$ and $\eta_0 = 10.0$ are $89.36$ and $75.90$, respectively.

\subsection{COMPOUND DIRICHLET DISTRIBUTION}
We consider online estimation of a compound Dirichlet distribution~\cite{gupta}. In this case, the EM updates for the model do not have a closed-form solution and therefore, numerical techniques such as Newton's~\cite{nocedal} method are used for performing the updates. The details are given in Appendix~E. As a result, the relative entropy inertia term also does not admit a closed-form and thus, we use the sampling approximation of the inertia term.
(See end of Section \ref{s:combo}.) We consider \num{2000} samples from a $10$ dimensional model and perform batch EM updates as well as online updates with mini-batch size equal to $100$. In order to form the inertia term, we use \num{2000} samples from the model and use parameters $(\eta_0, \beta) = (1.0, 0.9)$ for the learning rate. We use Newton's method for optimization. The result is shown in Figure~\ref{fig:results}-c. As can be seen, the online EM algorithm effectively learns the model parameters. The updates are stable for a lower or higher initial learning rate. The final negative log-likelihood values for $\eta_0 = 0.5$ and $\eta_0 =2.0$ are $10.09$ and $10.08$, respectively (results not shown in the figure).

\vspace{-0.1cm}
\subsection{Distributed Training of Gaussian Mixtures}

We conduct experiments on combining the parameters of Gaussian mixture models in a distributed setting. For this set of experiments, we consider the Fashion MNIST\footnote{\url{https://github.com/zalandoresearch/fashion-mnist}} (dim=784), CIFAR-10\footnote{\url{https://www.cs.toronto.edu/~kriz/cifar.html}} (dim = \num{3072}), and Covtype\footnote{\url{https://archive.ics.uci.edu/ml/datasets/covertype}} (dim = 54) datasets. The number of machines for each dataset is set to $M=3$, $3$, and $20$, respectively. To achieve an equal number of splits across machines, we consider a subset of 60K, 60K, and 500K points from each dataset, respectively. We set the number of mixtures $k$ equal to the number of classes, which amounts $k=10$ for Fashion MNIST, $k = 10$ for CIFAR-10, and $k=7$ for Covtype dataset. We use $(\eta_0, \beta) = (0.05, 0.5)$ for all datasets.

At each trial, all the machines are initialized with the same set of initial parameters. We consider synchronous updates where the parameters of all machines are combined into a single set of parameters at the end of each step and propagated back to each individual machine for the next step. Each machine receives a different set of \num{5000} observations at each step and the process is repeated until one pass over the whole dataset is achieved.

We compare two parameter combining strategies: 1) simple
averaging where mixture weights as well as the expected
values of the conditional sufficient statistics (i.e. means
and covariances of each mixture component) are averaged
over all machines~\citep{open}, and 2) our entropic
combining of parameters as in~\eqref{eq:comb} where we
average the complete data sufficient statistics. The
results are shown in Figure~\ref{fig:dist}. As can be seen,
the divergence based combining of the model provides a
consistently better performance. Specifically, it shows faster convergence to a better solution. Additionally, on the Fashion MNIST and Covtype datasets, the final combined model has a lower negative log-likelihood using our \mbox{divergence} based combining.

\section{CONCLUSION AND FUTURE WORK}
We provided an alternative view of the online EM algorithm~\citep{cappe}
based on divergences between the models. Our new formulation casts new insight on the
algorithm and facilitates finding the updates for more
complex models without the need for identifying the
sufficient statistics. The divergences between models
that we use as inertia terms are interesting in their own
right and are the most important outcome of this research. These divergences can be
approximated in cases where the EM updates do not have a closed-form.
Also, the divergences between the models lead to a new technique for combining models
which is useful in distributed settings.

There are a number of intriguing open problems coming out of the current work.
All our divergences are based on joint relative entropies where the new
model is always in the second argument. In online learning, the new
model is typically in the first argument (see e.g. \citep{eg}).
Also in the context of reinforcement learning \citep{gergeley}, the alternate
joint entropies for HMMs (with the new parameters as the first argument)
have been used effectively.
The alternate relative entropies appear to be more stable. Therefore, the
question is whether there is a use of the alternate for
producing useful updates for minimizing the negative log-likelihood of
hidden variable problems.

\bibliography{refs}
\bibliographystyle{icml2020}

\clearpage

\newpage
\appendix
\section{PROOF OF THEOREM~\ref{obs:cappe}}

\begin{proof} The online E and M steps in~\citep{cappe} are defined respectively as
 \begin{align}
     &\hat{Q}_{t+1}(\Thett)  = \hat{Q}_{t}(\Thett)\nonumber\\
     &\,\,+ \eta_{t} \,\big(\EE_{P(h|\,v_{t+1},\Theta^t)}[\log P(h, v_{t+1}|\,\Thett)] - \hat{Q}_{t}(\Thett)\big)\label{eq:cappe-e}\\ 
     & \Thett^{t+1} = \arg\max_{\Thett}\, \hat{Q}_{t+1}(\Thett) = \arg\min_{\Thett}\, -\hat{Q}_{t+1}(\Thett) \label{eq:cappe-m}
 \end{align}
 where
\[
     \hat{Q}_t(\Thett) = \EE_{P(h, v|\,\Theta^t)} \big[\log P(h, v|\,\Thett)\big]\, .
\]
The relative entropy divergence between the hidden variable models can be written as
\begin{align}
    \Delta(\Theta^t, \Thett) = &  \int_{h,\,v} P(h,v|\,\Theta^t) \log\frac{P(h,v|\,\Theta^t)}{P(h,v|\,\Thett)} \nonumber\\
    = & \int_{h,\,v} P(h,v|\,\Theta^t) \log P(h,v|\,\Theta^t) - \hat{Q}_t(\Thett) \label{eq:cappe-inertia}
    \end{align}
    Note that the first term in~\eqref{eq:cappe-inertia} does not depend on $\Thett$. Additionally, for the EM upper-bound, we have
    \begin{align}
        \UP_{\Theta}&(\Thett|\,v_{t+1})  = \EE_{P(h|\,v_{t+1}, \Theta^t)} \big[
        \log P(h|v_{t+1},\Theta)\big]\nonumber\\
    &-
    \EE_{P(h|\,v_{t+1}, \Theta^t)} \big[\log P(h, v_{t+1}|\,\Thett)\big] \label{eq:cappe-up}
\end{align}
where again the first term does not depend on $\Thett$ and the second term corresponds to the negative of the expectation in~\eqref{eq:cappe-e}. Comparing~\eqref{eq:online-em} with $\eta^{(t)} = \sfrac{\eta_{t}}{(1 - \eta_{t})}$ and ignoring the constant yields the same updates given in~\eqref{eq:cappe-e} and~\eqref{eq:cappe-m}.
\end{proof}

\section{BREGMAN DIVERGENCE AND EXPONENTIAL FAMILY}
In this section, we review Bregman divergence and exponential family as well as the required lemmas for deriving the updates.

For a real-valued continuously-differentiable and strictly convex function $F:\, \RR^d \rightarrow \RR$, the Bregman divergence~\cite{bregman,nmf} $\Delta_F(\zt, \z)$ between $\zt$ and $\z$ is defined as
$$
\Delta_F(\zt, \z) =  F(\zt) - F(\z) -  f(\z)\cdot (\zt - \z)\, ,
$$
where $f(\z) \defeq \nabla_{\z}F(\z)$. The gradient wrt the first arguments take the form
\begin{align*}
     \nabla_{\zt}\, \Delta_F(\zt,\z) & = f(\zt) - f(\z)\, ,\\
      \intertext{while the gradient wrtthe second argument becomes}
        \nabla_{\z}\, \Delta_F(\zt,\z) & = -\nabla^2F(\z) (\zt - \z)\, .
    \end{align*}
    The Fenchel dual~\cite{urruty} of the function $F$ is defined as
    $$
    F^*(\zd) = \sup_{\zp}\,\, \zp \cdot \z^* - F(\zp)\, .
    $$
    Assuming that the supremum is achieved at $\z$, we have the following relation between variables $\z$ and $\zd$
     $$\z^* = f(\z) \text{\, , \,\,\,} \z = f^*(\z^*) \text{\, , \, and \,\,} f^* = f^{-1}\, ,$$
    where $f^*(\zd) \defeq \nabla_{\zd} F^*(\zd)$. Note that as a result of convexity of $F^*$, we can form the dual Bregman divergence using $F^*$ as the generating convex function. The following equality holds for pairs of dual variables $(\z, \zd)$ and $(\zt,\zt^*)$
    \begingroup
    \allowdisplaybreaks
    \begin{align*}
        \Delta_F(\zt, \z)
        & = F(\zt) - F(\z) - f(\z)\cdot (\zt - \z)\\
        &\qquad \, + \, f(\zt)\cdot \zt - f(\zt)\cdot \zt\\
        & = \underbrace{-F(\z) + f(\z)\cdot \z}_{F^*(\z^*)}  + \underbrace{F(\zt) - f(\zt)\cdot \zt}_{-F^*(\zt^*)}\\
        &\qquad - \underbrace{\zt\cdot(f(\z) - f(\zt))}_{f^*(\zt^*)\cdot(\z^* - \zt^*)}\\
        & = \Delta_{F^*}(\z^*, \zt^*)\, .
    \end{align*}
    \endgroup
       Note that the order of variables is reversed when switching to the dual divergence. Additionally, using the definition of the dual function, we have
        $$
    \Delta_F(\zt,\z) = \Delta_{F^*}(\zd,\zt^*) = F(\zt) + F^*(\z^*) - \zt \cdot \z^*\, .
    $$
    The following lemmas for combining Bregman divergences are useful for our discussion of our EM updates.
    \begin{lemma}{\textbf{Forward Combination}}
    \label{lem:forward-comb}
    Let $\{\alpha_i\}_{i=1}^N$ where $\alpha_i \in \RR_+$ and $\sum_i \alpha_i > 0$. We have
        $$
    \z_{opt} = \arg\min_{\zt}
    \sum_i \alpha_i\, \Delta_F(\zt, \z_i)
    = f^{*}\Big(\frac{\sum_i \alpha_i\, f(\z_i)}{\sum_i \alpha_i}\Big)\, .
    $$
    \end{lemma}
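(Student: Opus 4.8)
The statement to prove is Lemma~\ref{lem:forward-comb} (Forward Combination): minimizing a weighted sum of Bregman divergences $\sum_i \alpha_i \Delta_F(\zt, \z_i)$ over the first argument $\zt$ yields the point whose dual coordinate is the weighted average of the dual coordinates $f(\z_i)$.

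The plan is to proceed by first-order optimality. Since $F$ is strictly convex and continuously differentiable, each $\Delta_F(\zt,\z_i)$ is convex in its first argument, so the weighted sum with nonnegative weights (not all zero) is convex in $\zt$; hence setting the gradient to zero is both necessary and sufficient for the global minimizer. First I would invoke the gradient formula for the Bregman divergence with respect to its first argument, already recorded in the excerpt as $\nabla_{\zt}\Delta_F(\zt,\z) = f(\zt) - f(\z)$. Differentiating the objective term by term gives
\[
    \nabla_{\zt}\sum_i \alpha_i \Delta_F(\zt,\z_i) = \sum_i \alpha_i\big(f(\zt) - f(\z_i)\big) = \Big(\sum_i \alpha_i\Big) f(\zt) - \sum_i \alpha_i f(\z_i).
\]
Setting this to zero and dividing by $\sum_i \alpha_i > 0$ yields $f(\z_{opt}) = \frac{\sum_i \alpha_i f(\z_i)}{\sum_i \alpha_i}$.

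The final step is to solve for $\z_{opt}$ itself. Using the duality relations stated in the excerpt, namely $\z^* = f(\z)$, $\z = f^*(\z^*)$, and $f^* = f^{-1}$, I would apply $f^*$ to both sides of the optimality condition to conclude
\[
    \z_{opt} = f^*\Big(\tfrac{\sum_i \alpha_i f(\z_i)}{\sum_i \alpha_i}\Big),
\]
which is exactly the claimed expression. Equivalently, in dual coordinates this says $\z_{opt}^* = \frac{\sum_i \alpha_i \z_i^*}{\sum_i \alpha_i}$, i.e.\ the optimum is the arithmetic mean of the dual coordinates.

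I do not anticipate a genuine obstacle here, since the result is essentially the statement that the Bregman centroid in the first argument is the dual-coordinate average. The only point requiring mild care is justifying that the stationary point is indeed the global minimizer and that $f^*$ is well-defined at the averaged point: both follow from strict convexity of $F$ (guaranteeing $f = \nabla F$ is invertible with $f^* = f^{-1}$) together with the convexity of the objective, so that the unique critical point is the global minimum. If one wishes to be careful about domain issues, one would note that the convex combination of the $f(\z_i)$ lies in the range of $f$ (the domain of $f^*$) whenever the $\z_i$ lie in the domain of $F$ and the range of $f$ is convex, which holds for the exponential-family generators $G$ considered in the paper.
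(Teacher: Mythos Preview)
Your proof is correct and follows essentially the same approach as the paper's: differentiate the objective using the gradient identity $\nabla_{\zt}\Delta_F(\zt,\z)=f(\zt)-f(\z)$, set the result to zero, solve for $f(\z_{opt})$, and invert via $f^*=f^{-1}$. You additionally justify that the stationary point is a global minimizer via convexity and comment on domain issues, which the paper leaves implicit.
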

\begin{proof}
Taking the derivative of the objective function wrt $\zt$ and using the gradient property of the Bregman divergence with respect to the first argument, we have
$$
    \sum_i \alpha_i\, \Big(f(\zt) - f(\z_i)\Big) = 0\, ,
$$
which yields
    \begin{align*}
        &\big(\sum_i \alpha_i\big)\, f(\z_{opt})  = \sum_i \alpha_i\, f(\z_i)\, ,\\
        & \text{\,\,\, or\,\,\,\,\,} \z_{opt} = f^{-1}\Big(\frac{\sum_i \alpha_i\, f(\z_i)}{\sum_i \alpha_i}\Big)\, .
    \end{align*}
    Using the fact that $f^{-1} = f^*$ completes the proof.
\end{proof}
    \begin{corollary}{\textbf{Forward Triangular Equality}}
        \begin{multline*}
        \sum_i \alpha_i\, \Delta(\zt, \z_i)
    \,-\, \sum_i \alpha_i\, \Delta(\z_{opt}, \z_i)
   \\
            = \big(\sum_i \alpha_i\big) \,\Delta(\zt, \z_{opt})\, .
        \end{multline*}
    \end{corollary}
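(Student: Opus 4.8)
The plan is to prove the Forward Triangular Equality by direct expansion using the definition of the Bregman divergence, followed by a single substitution of the first-order optimality condition that characterizes $\z_{opt}$. First I would write out both weighted sums explicitly via $\Delta_F(\zt,\z)=F(\zt)-F(\z)-f(\z)\cdot(\zt-\z)$, namely
\begin{align*}
\sum_i \alpha_i\,\Delta_F(\zt,\z_i) &= \sum_i \alpha_i\big(F(\zt)-F(\z_i)-f(\z_i)\cdot(\zt-\z_i)\big),\\
\sum_i \alpha_i\,\Delta_F(\z_{opt},\z_i) &= \sum_i \alpha_i\big(F(\z_{opt})-F(\z_i)-f(\z_i)\cdot(\z_{opt}-\z_i)\big).
\end{align*}

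Subtracting the second line from the first, the $-\alpha_i F(\z_i)$ terms cancel pairwise, and in the inner products the common $+\alpha_i f(\z_i)\cdot\z_i$ contributions also cancel, leaving
\[
\big(\textstyle\sum_i \alpha_i\big)\big(F(\zt)-F(\z_{opt})\big)-\big(\textstyle\sum_i \alpha_i\, f(\z_i)\big)\cdot(\zt-\z_{opt}).
\]
The key step is then to invoke the defining property of $\z_{opt}$ from Lemma~\ref{lem:forward-comb}: since $\z_{opt}=f^*\big(\frac{\sum_i \alpha_i f(\z_i)}{\sum_i \alpha_i}\big)$ and $f^*=f^{-1}$, we have the forward condition $\sum_i \alpha_i\, f(\z_i)=\big(\sum_i \alpha_i\big)\,f(\z_{opt})$. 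Substituting this converts the linear term into $\big(\sum_i \alpha_i\big)\,f(\z_{opt})\cdot(\zt-\z_{opt})$, and collecting the surviving terms gives $\big(\sum_i \alpha_i\big)\big(F(\zt)-F(\z_{opt})-f(\z_{opt})\cdot(\zt-\z_{opt})\big)=\big(\sum_i \alpha_i\big)\,\Delta_F(\zt,\z_{opt})$, which is exactly the claimed identity.

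There is essentially no genuine obstacle here; the result is a purely algebraic Pythagorean-type identity that follows the moment the optimality condition is applied. The only point requiring care is that the substituted condition is the \emph{forward} (natural-parameter) optimality relation $\sum_i \alpha_i f(\z_i)=(\sum_i \alpha_i)f(\z_{opt})$ established in Lemma~\ref{lem:forward-comb}, rather than a dual-coordinate average; accordingly I would cite that lemma for the characterization of $\z_{opt}$ instead of re-deriving the optimum, and would note that the identity holds for any weights $\alpha_i\in\RR_+$ with $\sum_i \alpha_i>0$, matching the hypotheses of the lemma.
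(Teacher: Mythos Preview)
Your proof is correct. The paper states this corollary without proof, treating it as an immediate consequence of Lemma~\ref{lem:forward-comb}; your direct expansion of the Bregman divergences followed by substitution of the optimality condition $\sum_i \alpha_i f(\z_i)=\big(\sum_i \alpha_i\big) f(\z_{opt})$ is exactly the intended argument.
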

     \begin{lemma}{\textbf{Backward Combination}}
    \label{lem:backward-comb}
    Let $\{\alpha_i\}_{i=1}^N$ where $\alpha_i \in \RR_+$ and $\sum_i \alpha_i > 0$. We have
         \begin{align*}
             & \z^*_{opt}  = \arg\min_{\zt^*} \bigg[
                 \sum_i \alpha_i\, \Delta_{F^*}(\z^*_i, \zt^*)\\
                 &\,\,= \sum_i \alpha_i \big(F^*(\z^*_i) + F(\zt) - \zt\, \cdot\, \z^*_i \big)\bigg] = \frac{\sum_i \alpha_i\, \z^*_i}{\sum_i \alpha_i}\, .
         \end{align*}
     \end{lemma}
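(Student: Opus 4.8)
The plan is to collapse the weighted sum of dual Bregman divergences into an equivalent convex minimization over the primal variable and then apply Lemma~\ref{lem:partial-comb}. Using the expansion already recorded in the statement, $\Delta_{F^*}(\z^*_i,\zt^*) = F^*(\z^*_i) + F(\zt) - \zt\cdot\z^*_i$, where $\zt \defeq f^*(\zt^*)$ is the primal point dual to $\zt^*$, summing against the weights $\alpha_i$ gives
\[
    \sum_i \alpha_i\, \Delta_{F^*}(\z^*_i,\zt^*) = \underbrace{\sum_i \alpha_i\, F^*(\z^*_i)}_{\text{const. wrt }\zt^*} + \Big(\sum_i \alpha_i\Big) F(\zt) - \Big(\sum_i \alpha_i\, \z^*_i\Big)\cdot \zt\, .
\]
Discarding the first term, which does not depend on $\zt^*$, the objective is a positive multiple of $\sum_i \alpha_i\big(F(\zt) - \zt\cdot\z^*_i\big)$, exactly the form treated in Lemma~\ref{lem:partial-comb}.

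The one genuine subtlety is that a Bregman divergence need not be convex in its second argument, so minimizing $\Delta_{F^*}(\z^*_i,\cdot)$ over $\zt^*$ is not obviously a convex problem. I would resolve this by the reparametrization $\zt^*\mapsto \zt = f^*(\zt^*)$, which is a bijection between the dual and primal domains since $f^* = f^{-1}$. In the primal variable the objective is manifestly convex (it is $(\sum_i\alpha_i)F(\zt)$ plus a linear term, and $F$ is convex), so the reparametrized problem has a unique global minimizer characterized by a vanishing gradient. This convexifying change of variables is where essentially all the work sits; the remaining computation is routine.

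With the problem in primal form I would invoke Lemma~\ref{lem:partial-comb} with $G\defeq F$ and $g\defeq f$, the points $\z^*_i$ playing the role of the $\mu_m$, or equivalently differentiate $\sum_i\alpha_i\big(F(\zt)-\zt\cdot\z^*_i\big)$ and set $\big(\sum_i\alpha_i\big)f(\zt) - \sum_i\alpha_i\,\z^*_i = 0$. Either route gives $f(\z_{opt}) = (\sum_i\alpha_i\,\z^*_i)/(\sum_i\alpha_i)$, and applying the primal-dual correspondence $\z^* = f(\z)$ then yields
\[
    \z^*_{opt} = \frac{\sum_i \alpha_i\, \z^*_i}{\sum_i \alpha_i}\, .
\]
It is worth noting the contrast with the Forward Combination (Lemma~\ref{lem:forward-comb}): there the free point occupies the first argument and the optimum averages the gradients $f(\z_i)$, whereas here it occupies the second argument and the optimum is the plain weighted average of the dual points $\z^*_i$ themselves. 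This is precisely the averaging of expectation (sufficient-statistic) parameters used throughout the model-combining updates, and the main obstacle is conceptual rather than computational: recognizing that the dual-argument problem becomes convex only after passing to the primal variable, which in turn requires invertibility of $f$, i.e.\ strict convexity of $F$.
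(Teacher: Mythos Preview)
Your proof is correct but follows a different path from the paper's own argument. The paper differentiates the objective directly in the dual variable $\zt^*$, using the second-argument gradient formula $\nabla_{\zt^*}\Delta_{F^*}(\z^*_i,\zt^*) = -\nabla^2 F^*(\zt^*)\,(\z^*_i - \zt^*)$, sums over $i$, and then cancels the common Hessian factor $\nabla^2 F^*(\zt^*)$ by invoking its positive (semi)definiteness. You instead reparametrize to the primal variable $\zt = f^*(\zt^*)$ via the mixed expansion $\Delta_{F^*}(\z^*_i,\zt^*) = F^*(\z^*_i) + F(\zt) - \zt\cdot\z^*_i$, which turns the problem into a manifestly convex minimization in $\zt$ and lets you invoke Lemma~\ref{lem:partial-comb} (or a one-line first-order computation) without ever touching the Hessian. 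Your route makes the convexity structure explicit and cleanly isolates the role of strict convexity of $F$ (needed for the bijection $f^* = f^{-1}$); the paper's route is shorter but implicitly requires the Hessian to be invertible to cancel it, which is the same strict-convexity hypothesis stated less transparently. Both arrive at the identical weighted-average formula for $\z^*_{opt}$.
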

    \begin{proof}
        Taking the derivative of the objective function wrt $\zt^*$ and using the gradient property of the Bregman divergence with respect to the first argument, we have
    $$
        -\sum_i \alpha_i\, \Big(\nabla^2 F^*(\zt^*)\, \big(\z_i^* - \zt^*\big)\Big) = 0\, .
    $$
Using the fact that $\nabla^2 F^*(\zt^*) \succeq 0$ and rearranging the terms concludes the proof.
    \end{proof}
    \begin{corollary}{\textbf{Backward Triangular Equality}}
        \begin{multline*}
        \sum_i \alpha_i\, \Delta_{F^*}(\z_i^*, \zt^*)
            \,-\, \sum_i \alpha_i\, \Delta(\z^*_i, \z^*_{opt})\\
            = \big(\sum_i \alpha_i\big) \,\Delta(\z^*_{opt},\zt^*)\, .
        \end{multline*}
    \end{corollary}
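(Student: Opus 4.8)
The plan is to obtain the Backward Triangular Equality as the exact Fenchel dual of the Forward Triangular Equality (the immediately preceding corollary), and, as an independent check, to confirm it by a direct expansion of $\Delta_{F^*}$. I would lean on two facts already in the excerpt: the duality identity $\Delta_F(a,b)=\Delta_{F^*}(b^*,a^*)$, in which the argument order is reversed when passing to the conjugate, and the optimum from Lemma~\ref{lem:backward-comb}, namely $\z^*_{opt}=\frac{\sum_i\alpha_i\,\z_i^*}{\sum_i\alpha_i}$. I would first observe that the forward and backward optima are a dual pair: comparing Lemma~\ref{lem:forward-comb} with Lemma~\ref{lem:backward-comb} gives $f(\z_{opt})=\frac{\sum_i\alpha_i\,f(\z_i)}{\sum_i\alpha_i}=\frac{\sum_i\alpha_i\,\z_i^*}{\sum_i\alpha_i}=\z^*_{opt}$, so $\z^*_{opt}=f(\z_{opt})$.

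The main step is then a term-by-term dualization. Applying $\Delta_F(a,b)=\Delta_{F^*}(b^*,a^*)$ to each of the three terms of the Forward Triangular Equality yields $\Delta_F(\zt,\z_i)=\Delta_{F^*}(\z_i^*,\zt^*)$, $\Delta_F(\z_{opt},\z_i)=\Delta_{F^*}(\z_i^*,\z^*_{opt})$, and $\Delta_F(\zt,\z_{opt})=\Delta_{F^*}(\z^*_{opt},\zt^*)$; substituting these into the forward identity turns it verbatim into the claimed backward identity.

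As a self-contained verification I would instead expand the left-hand side directly, writing $A\defeq\sum_i\alpha_i$ and using $\Delta_{F^*}(\z^*,\w^*)=F^*(\z^*)-F^*(\w^*)-f^*(\w^*)\cdot(\z^*-\w^*)$. The per-index difference $\Delta_{F^*}(\z_i^*,\zt^*)-\Delta_{F^*}(\z_i^*,\z^*_{opt})$ cancels the $F^*(\z_i^*)$ term, leaving an expression affine in $\z_i^*$ whose only $\z_i^*$-dependence is $\big(f^*(\z^*_{opt})-f^*(\zt^*)\big)\cdot\z_i^*$. Taking the $\alpha_i$-weighted sum and substituting $\sum_i\alpha_i\,\z_i^*=A\,\z^*_{opt}$ replaces every $\z_i^*$ by $\z^*_{opt}$; the terms in $f^*(\z^*_{opt})\cdot\z^*_{opt}$ cancel and the remainder collapses to $A\,\Delta_{F^*}(\z^*_{opt},\zt^*)$. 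There is no genuine obstacle here, since the statement is an exact algebraic identity requiring no more than the strict convexity of $F$ already assumed; the only place demanding care is the reversal of argument order in the duality relation (in the slick proof), or equivalently the correct use of the optimality condition $\sum_i\alpha_i\,\z_i^*=A\,\z^*_{opt}$ in the direct computation, since getting either backwards would spoil the cancellation.
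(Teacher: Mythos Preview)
Your proposal is correct. The paper states this corollary without proof, placing it immediately after Lemma~\ref{lem:backward-comb} (Backward Combination); the implicit argument is exactly your second route---expand $\Delta_{F^*}$ and use the optimality condition $\sum_i\alpha_i\,\z_i^*=\big(\sum_i\alpha_i\big)\z^*_{opt}$ from that lemma to collapse the sum. Your first route, dualizing the Forward Triangular Equality via $\Delta_F(a,b)=\Delta_{F^*}(b^*,a^*)$ together with the observation $\z^*_{opt}=f(\z_{opt})$, is an equally valid and arguably slicker alternative that the paper's setup supports but does not spell out; it trades the algebraic bookkeeping for the one-line duality substitution, at the cost of requiring the forward identity as a black box.
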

    In some cases, the value of $F^*(\z^*_i)$ becomes unbounded (see Appendix B). However, we can still apply Lemma~\ref{lem:forward-comb} and~\ref{lem:backward-comb} by dropping the $F^*(\z^*_i)$ terms from the objective.
    \begin{lemma}{\textbf{Partial Combination}}
    Let $\{\alpha_i\}_{i=1}^N$ where $\alpha_i \in \RR_+$ and $\sum_i \alpha_i > 0$. We have
        \begin{align*}
            \z_{opt} &= \arg\min_{\zt} \sum_i \alpha_i \big(F(\zt)-
    \zt\, \cdot\, \z^*_i \big) \\
            &= f^{*} \Big(\frac{\sum_i \alpha_i\, f(\z_i)}{\sum_i \alpha_i}\Big)\, ,
        \end{align*}
    i.e.
        $$
            \z^*_{opt} = \frac{\sum_i \alpha_i\, \z^*_i}{\sum_i \alpha_i}\, .
            $$
    \end{lemma}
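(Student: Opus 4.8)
The plan is to treat the objective $J(\zt) \defeq \sum_i \alpha_i\big(F(\zt) - \zt\cdot\z_i^*\big)$ directly, exploiting that it is a nonnegatively weighted sum of convex functions of $\zt$. Since $F$ is strictly convex and $\zt\mapsto \zt\cdot\z_i^*$ is linear, each summand is convex, and because $\alpha_i\in\RR_+$ with $\sum_i\alpha_i>0$, the function $J$ is convex with a unique stationary point, which is therefore its global minimizer. In fact $J$ is precisely the objective of the Backward Combination lemma (Lemma~\ref{lem:backward-comb}) after discarding the terms $\sum_i\alpha_i\, F^*(\z_i^*)$, which do not depend on $\zt$; so the conclusion can alternatively be read straight off that lemma. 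I would nonetheless record the short direct derivation for self-containedness.

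First I would compute the gradient using $\nabla_{\zt}F(\zt)=f(\zt)$, which gives
$$\nabla_{\zt}J(\zt) = \sum_i \alpha_i\big(f(\zt) - \z_i^*\big)\, .$$
Setting this to zero and using $\sum_i\alpha_i>0$ to divide yields $f(\z_{opt}) = \big(\sum_i \alpha_i\,\z_i^*\big)\big/\big(\sum_i\alpha_i\big)$. Since the dual variable satisfies $\z_i^* = f(\z_i)$, the right-hand side equals $\big(\sum_i\alpha_i\, f(\z_i)\big)\big/\big(\sum_i\alpha_i\big)$. Inverting with $f^{-1}=f^*$ then produces $\z_{opt}=f^*\big((\sum_i\alpha_i\, f(\z_i))/(\sum_i\alpha_i)\big)$, while reading $\z_{opt}^*=f(\z_{opt})$ off the stationarity condition delivers the claimed expectation-parameter form $\z_{opt}^* = (\sum_i\alpha_i\,\z_i^*)/(\sum_i\alpha_i)$.

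The computation itself is routine; the only point requiring genuine care is the legitimacy of the objective's form, namely that we may drop the $F^*(\z_i^*)$ terms appearing in the full backward divergence $\Delta_{F^*}(\z_i^*,\zt^*)$. This is exactly the situation flagged in the remark preceding the lemma: even when $F^*(\z_i^*)$ is unbounded (as in the singleton models noted above), it is independent of $\zt$ and hence cannot affect the $\argmin$, so the partial objective is well posed and shares the minimizer of the full one. The remaining hypotheses—strict convexity of $F$, guaranteeing that $f$ is invertible (so $f^*=f^{-1}$ is well defined and the stationary point is unique), and $\sum_i\alpha_i>0$, permitting the division—are the standing assumptions on $F$ and on the weights, so no extra argument is needed.
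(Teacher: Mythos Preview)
Your proposal is correct and matches the paper's treatment. The paper does not give an independent proof of this lemma; it is presented immediately after the remark that one may drop the $F^*(\z_i^*)$ terms from the Backward Combination objective, and your argument---either reading the result off Lemma~\ref{lem:backward-comb} with those constants removed, or repeating the same first-order computation directly---is exactly that reasoning.
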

    \begin{corollary}
    \begin{multline*}
	\sum_i \alpha_i  \big(F(\zt)- \zt \cdot \z_i^* \big)  +
       \big(\sum_i \alpha_i\big) \,F^*(\z^*_{opt})\\
	= \big(\sum_i \alpha_i\big) \;\Delta_F(\zt, \z_{opt})\, .
    \end{multline*}
    \end{corollary}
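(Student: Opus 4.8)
The plan is to reduce the claimed identity to the Fenchel--Young-type equality $\Delta_F(\zt,\z) = F(\zt) + F^*(\z^*) - \zt\cdot\z^*$ established earlier in this appendix, after first rewriting the left-hand side using the optimality characterization supplied by the Partial Combination Lemma. Abbreviating $A \defeq \sum_i \alpha_i > 0$, I would expand the weighted sum on the left and pull the factor $F(\zt)$, which is constant in $i$, out of the summation, giving $A\,F(\zt) - \zt\cdot\sum_i \alpha_i\,\z_i^*$ for the first group of terms.

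Next I would invoke the Partial Combination Lemma, which asserts $\z^*_{opt} = \big(\sum_i \alpha_i\,\z_i^*\big)/A$, i.e. $\sum_i \alpha_i\,\z_i^* = A\,\z^*_{opt}$. Substituting this collapses the linear term into $-A\,\zt\cdot\z^*_{opt}$, so the entire left-hand side factors as $A\big(F(\zt) - \zt\cdot\z^*_{opt} + F^*(\z^*_{opt})\big)$. I would then recognize the bracketed expression as the appendix's dual identity specialized to $\z = \z_{opt}$ with its dual image $\z^* = \z^*_{opt}$, namely $\Delta_F(\zt,\z_{opt}) = F(\zt) + F^*(\z^*_{opt}) - \zt\cdot\z^*_{opt}$. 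Multiplying through by $A$ yields $A\,\Delta_F(\zt,\z_{opt})$, which is exactly the right-hand side.

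There is no substantial obstacle here; once the ingredients are assembled the argument is a one-line rearrangement. The only point requiring care is the duality bookkeeping: the Partial Combination Lemma produces $\z^*_{opt}$ as the averaged dual (expectation) parameter, and I must confirm that the minimizer $\z_{opt}$ entering $\Delta_F(\zt,\z_{opt})$ is precisely the primal point whose dual is this averaged $\z^*_{opt}$, so that the Fenchel--Young identity applies verbatim. Keeping track of the constant-in-$i$ term $F(\zt)$ when factoring out $A$ is the other small detail to verify.
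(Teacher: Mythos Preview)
Your proposal is correct. The paper does not spell out a proof for this corollary, but your argument is precisely the one the appendix sets up: it uses the Partial Combination Lemma's characterization $\z^*_{opt} = \big(\sum_i \alpha_i\,\z_i^*\big)/\sum_i \alpha_i$ together with the identity $\Delta_F(\zt,\z) = F(\zt) + F^*(\z^*) - \zt\cdot\z^*$ derived earlier in the same section, so there is nothing to add or correct.
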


\section{COMPOUND DIRICHLET DISTRIBUTION}

Compound Dirichlet distribution (also referred to as P\'{o}lya distribution)~\cite{gupta} is commonly used to model distribution over topics. A topic entails a distribution over words. More specifically, the compound Dirichlet distribution includes a non-negative parameter vector $\alpha > 0$ corresponding to a Dirichlet distribution over topics. The sampling process consists of sampling a topic $h_n$ for the $n$-th document from the Dirichlet distribution. The component $h_{n,i}$ corresponds to the probability of sampling the $i$-th word. Next, a set of iid samples $v_n$ are drawn from the topic. That is, $v_{n,i}$ denotes the frequency of the $i$-th word and $\sum_i v_{n,i}$ is the total number of words in the $n$-th document. Note that the sampled topics are hidden and only the set of documents are given. The set of model parameters equals to $\Theta = \{\alpha\}$.

The join distribution over the hidden topics and visible documents can be written as
$$
P(h,v|\, \Theta) = \prod_n  \frac{\Gamma(\alpha_0)}{\prod_j \Gamma(\alpha_j)}\, \frac{(\sum_j v_{n,j})!}{\prod_j (v_{n,j}!)}\, \prod_i\, h_{n,i}^{\alpha_i + v_{n,i} -1}\, .
$$
where $\alpha_0 = \sum_j \alpha_j$ and $\Gamma(\cdot)$ is the gamma function.

The marginal probability of the documents can be calculated by integrating out the hidden topics, that is,
$$
P(v|\, \Theta) = \prod_n \frac{\big(\sum_j v_{n,j}\big)! \, \Gamma(\alpha_0)\,\prod_j \Gamma\big(\alpha_j + v_{n,j}\big)}{\big(\prod_j v_{n,j}!\big)\, \big(\prod_j \Gamma(\alpha_j)\big)\, \Gamma\big(\sum_j (\alpha_j + v_{n,j})\big)}\, .
$$
The EM upper-bound can be written as
\begin{align*}
    & \UP_{\Theta} (\Thett) = \sfrac{1}{N}\,\sum_n P(h|\, v_n, \alpha)\, \log P(v_n, h|\, \widetilde{\alpha})\\
        &\,\,=\sfrac{1}{N}\,\sum_n P(h|\, v_n, \alpha)\, \log P(h|\, \widetilde{\alpha})\\
        &\,\,= \log\left(\frac{\Gamma(\widetilde{\alpha}_0)}{\prod_j \Gamma(\widetilde{\alpha}_j)}\right)\\
        &\,\,\,\,\,\, +  \sfrac{1}{N}\,\sum_n \sum_j P(h|\, v_n, \alpha)\, (\widetilde{\alpha}_j - 1)\log h_j\\
        &\,\,= N\, \log\left(\frac{\Gamma(\widetilde{\alpha}_0)}{\prod_j \Gamma(\widetilde{\alpha}_j)}\right)\\
        & + \sum_n \sum_j (\widetilde{\alpha}_j - 1)\, \bigg(\psi(v_{nj} + \alpha_j) - \psi(\sum_i v_{ni} + \alpha_0) \bigg)\, ,
    \end{align*}
    where\, $\psi(\alpha) \defeq \frac{\partial}{\partial\alpha}\log\Gamma(\alpha)$\, is called the digamma function. The inertia term on the hand
    \begin{align*}
        & \Delta(\Theta, \Thett) = \frac{\Gamma(\alpha_0)}{\Gamma(\widetilde{\alpha}_0)} - \sum_j \frac{\Gamma(\alpha_j)}{\Gamma(\widetilde{\alpha}_j)}\\
        &+ \Bigg[\sum_j (\alpha_j - \widetilde{\alpha}_j) \sum_v  \frac{\Gamma(\alpha_0)}{\prod_i \Gamma(\alpha_i)} \frac{(\sum_i v_i)!}{\prod_i (v_i!)} \frac{\Gamma(\alpha_0 + \sum_i v_i)}{\prod_i \Gamma(\alpha_i + v_i)}\,\\
        & \qquad \times \bigg(\psi(v_j + \alpha_j) - \psi(\sum_i v_i + \alpha_0) \bigg)\Bigg]\, ,
    \end{align*}
involves summing over all possible combinations of $v$ and therefore, is intractable. Alternatively, we can use the approximate form of the upper-bound to perform the updates.

A standard approach to minimize the upper-bound is the Newton's method~\cite{nocedal}, which requires calculating the gradient and the Hessian matrix. The gradient of the upper-bound can be written as
\begin{multline*}
     \nabla_{\alpha_i}\UP_{\Theta}(\Thett|\, \Vis)  = \psi(\widetilde{\alpha}_0) - \psi(\widetilde{\alpha}_i)\\
     + \sfrac{1}{N}\, \sum_n \big(\psi(\sum_j v_{nj} + \alpha_0) - \psi(v_{ni} + \alpha_i)\big)\, .
    \end{multline*}
    The Hessian is
        $$
        H =  \bigg(\psi_1(\widetilde{\alpha}_0)\bm{1}\bm{1}^\top - \text{diag}\big[\psi_1(\widetilde{\alpha}_1),\ldots,\psi_1(\widetilde{\alpha}_d)\big]\bigg)\, ,
        $$
        where\, $\psi_1(\alpha) \defeq \frac{\partial}{\partial\alpha}\psi(\alpha)$\, is called the trigamma function.


\end{document}